\newtheorem{definition}{Definition}
\newtheorem{proposition}{Proposition}
\newtheorem{property}{Property}
\newcommand{\E}{\mathbb{E}}
\newcommand{\gN}{\mathcal{N}}
\newcommand{\calO}{\mathcal{O}}
\newcommand{\R}{\mathbb{R}}
\DeclareMathOperator*{\argmax}{arg\,max}
\def\[#1\]{\begin{align}#1\end{align}}
\icmltitlerunning{Set Transformer}
\begin{document}

\twocolumn[
\icmltitle{Set Transformer: A Framework for Attention-based \\Permutation-Invariant Neural Networks}

\icmlsetsymbol{equal}{*}

\begin{icmlauthorlist}
\icmlauthor{Juho Lee}{oxford1,aitrics}
\icmlauthor{Yoonho Lee}{kakao}
\icmlauthor{Jungtaek Kim}{postech}
\icmlauthor{Adam R. Kosiorek}{oxford1,oxford2}
\icmlauthor{Seungjin Choi}{postech}
\icmlauthor{Yee Whye Teh}{oxford1}
\end{icmlauthorlist}

\icmlaffiliation{oxford1}{Department of Statistics, University of Oxford, United Kingdom}
\icmlaffiliation{oxford2}{Oxford Robotics Institute, University of Oxford, United Kingdom}
\icmlaffiliation{postech}{Department of Computer Science and Engineering, POSTECH, Republic of Korea}
\icmlaffiliation{aitrics}{AITRICS, Republic of Korea}
\icmlaffiliation{kakao}{Kakao Corporation, Republic of Korea}

\icmlcorrespondingauthor{Juho Lee}{juho.lee@stats.ox.ac.uk}

\icmlkeywords{Set transformer, Set-taking network, Attention-based network, Permutation-invariant network}

\vskip 0.3in
]



\printAffiliationsAndNotice{}  

\begin{abstract}
Many machine learning tasks such as multiple instance learning, 3D shape recognition and few-shot image classification are defined on sets of instances.
Since solutions to such problems do not depend on the order of elements of the set, models used to address them should be \textit{permutation invariant}.
We present an attention-based neural network module, the \textit{Set Transformer}, specifically designed to model interactions among elements in the input set.
The model consists of an encoder and a decoder, both of which rely on attention mechanisms.
In an effort to reduce computational complexity, we introduce an attention scheme inspired by inducing point methods from sparse Gaussian process literature.
It reduces computation time of self-attention from quadratic to linear in the number of elements in the set.
We show that our model is theoretically attractive and we evaluate it on a range of tasks, demonstrating increased performance compared to recent methods for set-structured data.
\end{abstract}

\section{Introduction}
\label{sec:introduction}

Learning representations has proven to be an essential problem for deep learning and its many success stories.
The majority of problems tackled by deep learning are \textit{instance-based} and take the form of mapping a fixed-dimensional input tensor to its corresponding target value~\citep{krizhevsky2012imagenet, graves2013speech}.
\if{0}
There are also \textit{sequential} problems, often treated by the means of recurrent neural networks (RNNs), which can process inputs of variable length.
Learning long-range dependencies remains an issue, however, and often requires various engineering tricks~\citep{Le2015, Neil2016}.
\fi

For some applications, we are required to process \emph{set-structured data}.
Multiple instance learning~\citep{Dietterich1997, Maron1998} is an example of such a \textit{set-input} problem, where a set of instances is given as an input and the corresponding target is a label for the entire set.
Other problems such as 3D shape recognition~\citep{Wu2015, BaoguangShi2015, Su2015, Charles2017}, sequence ordering~\citep{Vinyals2016a}, and various set operations~\citep{muandet2012learning, oliva2013distribution, Edwards2017, Zaheer2017} can also be viewed as the set-input problems.
Moreover, many meta-learning~\citep{ThrunS98book, SchmidhuberJ87phd} problems which learn using different, 
but related tasks may also be treated as set-input tasks where an input set corresponds to the training dataset of a single task.
For example, few-shot image classification~\citep{finn2017model, Snell2017, Lee2018} operates by building a classifier using a support set of images, which is evaluated with query images.

A model for \textit{set-input} problems should satisfy two critical requirements.
First, it should be \textit{permutation invariant} --- the output of the model should not change under any permutation of the elements in the input set.
Second, such a model should be able to process input sets of any size.
While these requirements stem from the definition of a set, they are not easily satisfied in neural-network-based models: classical feed-forward neural networks violate both requirements, and RNNs are sensitive to input order.

Recently, \citet{Edwards2017} and \citet{Zaheer2017} propose neural network architectures which meet both criteria, which we call \emph{set pooling} methods.
In this model, each element in a set is first independently fed into a feed-forward neural network that takes fixed-size inputs.
Resulting feature-space embeddings are then aggregated using a \emph{pooling} operation ($\operatorname{mean}$, $\operatorname{sum}$, $\operatorname{max}$ or similar).
The final output is obtained by further non-linear processing of the aggregated embedding.
This remarkably simple architecture satisfies both aforementioned requirements, and more importantly, is proven to be a universal approximator for any set function~\citep{Zaheer2017}.
Thanks to this property, it is possible to learn a complex mapping between input sets and their target outputs in a black-box fashion, much like with feed-forward or recurrent neural networks.

Even though this set pooling approach is theoretically attractive, it remains unclear whether we can approximate complex mappings well using only instance-based feature extractors and simple pooling operations.
Since every element in a set is processed independently in a set pooling operation, some information regarding interactions between elements has to be necessarily discarded.
This can make some problems unnecessarily difficult to solve.

Consider the problem of \emph{amortized clustering}, where we would like to learn a parametric mapping from an input set of points to the centers of clusters of points inside the set.
Even for a toy dataset in 2D space, this is not an easy problem.
The main difficulty is that the parametric mapping must assign each point to its corresponding cluster while modelling the explaining away pattern such that the resulting clusters do not attempt to explain overlapping subsets of the input set.
Due to this innate difficulty, clustering is typically solved via iterative algorithms that refine randomly initialized clusters until convergence.
Even though a neural network with a set poling operation can approximate such an amortized mapping by learning to quantize space, a crucial shortcoming is that this quantization cannot depend on the contents of the set.
This limits the quality of the solution and also may make optimization of such a model more difficult;
we show empirically in Section~\ref{sec:experiments} that such pooling architectures suffer from under-fitting.


In this paper, we propose a novel set-input deep neural network architecture called the \emph{Set Transformer}, (\textit{cf.} \emph{Transformer},~\cite{Vaswani2017}).
The novelty of the Set Transformer is in three important design choices:
\begin{enumerate}
\item We use a self-attention mechanism to process every element in an input set, which allows our approach to naturally encode pairwise- or higher-order interactions between elements in the set.
\item We propose a method to reduce the $\calO(n^2)$ computation time of full self-attention (e.g. the Transformer) to $\calO(nm)$ where $m$ is a fixed hyperparameter, allowing our method to scale to large input sets.
\item We use a self-attention mechanism to aggregate features, which is especially beneficial when the problem requires multiple outputs which depend on each other, such as the problem of meta-clustering, where the meaning of each cluster center heavily depends its location relative to the other clusters.
\end{enumerate}
We apply the Set Transformer to several set-input problems and empirically demonstrate the  importance and effectiveness of these design choices, and show that we can achieve the state-of-the-art performances for the most of the tasks.

\section{Background}
\label{sec:background}

\subsection{Pooling Architecture for Sets}
\label{subsec:pooling}

Problems involving a set of objects have the \textit{permutation invariance} property: the target value for a given set is the same regardless of the order of objects in the set.
A simple example of a permutation invariant model is a network that performs pooling over embeddings extracted from the elements of a set.
More formally,
\[
\label{eq:deepset}
\mathrm{net}(\{x_1, \ldots, x_n \}) = \rho( \mathrm{pool}( \{ \phi(x_1), \ldots, \phi(x_n) \})).
\]
\citet{Zaheer2017} have proven that all permutation invariant functions can be represented as \eqref{eq:deepset} when $\mathrm{pool}$ is the $\operatorname{sum}$ operator and $\rho, \phi$ any continuous functions, thus justifying the use of this architecture for set-input problems.

Note that we can deconstruct \eqref{eq:deepset} into two parts:
an $\textit{encoder}$ ($\phi$) which independently acts on each element of a set of $n$ items, and a $\textit{decoder}$ ($\rho( \mathrm{pool}(\cdot))$) which aggregates these encoded features and produces our desired output.
Most network architectures for set-structured data follow this encoder-decoder structure.

\citet{Zaheer2017} additionally observed that the model remains permutation invariant even if the encoder is a stack of permutation-equivariant layers:

\begin{definition}
Let $S_n$ be the set of all permutations of indices $\{1, \ldots, n\}$.
A function $f: X^n \rightarrow Y^n$ is permutation equivariant iff for any permutation $\pi \in S_n$, $f(\pi x) = \pi f(x)$.
\end{definition}

An example of a permutation-equivariant layer is
\[
f_i(x; \{x_1, \ldots, x_n\}) = \sigma_i(\lambda x + \gamma \mathrm{pool}(\{ x_1, \ldots, x_n \}))
\]
where $\mathrm{pool}$ is the pooling operation, $\lambda, \gamma$ are learnable scalar variables, and $\sigma(\cdot)$ is a nonlinear activation function.

\begin{figure*}[h]
	\centering
	\subfigure[Our model]
	{
		\includegraphics[width=0.26\textwidth]{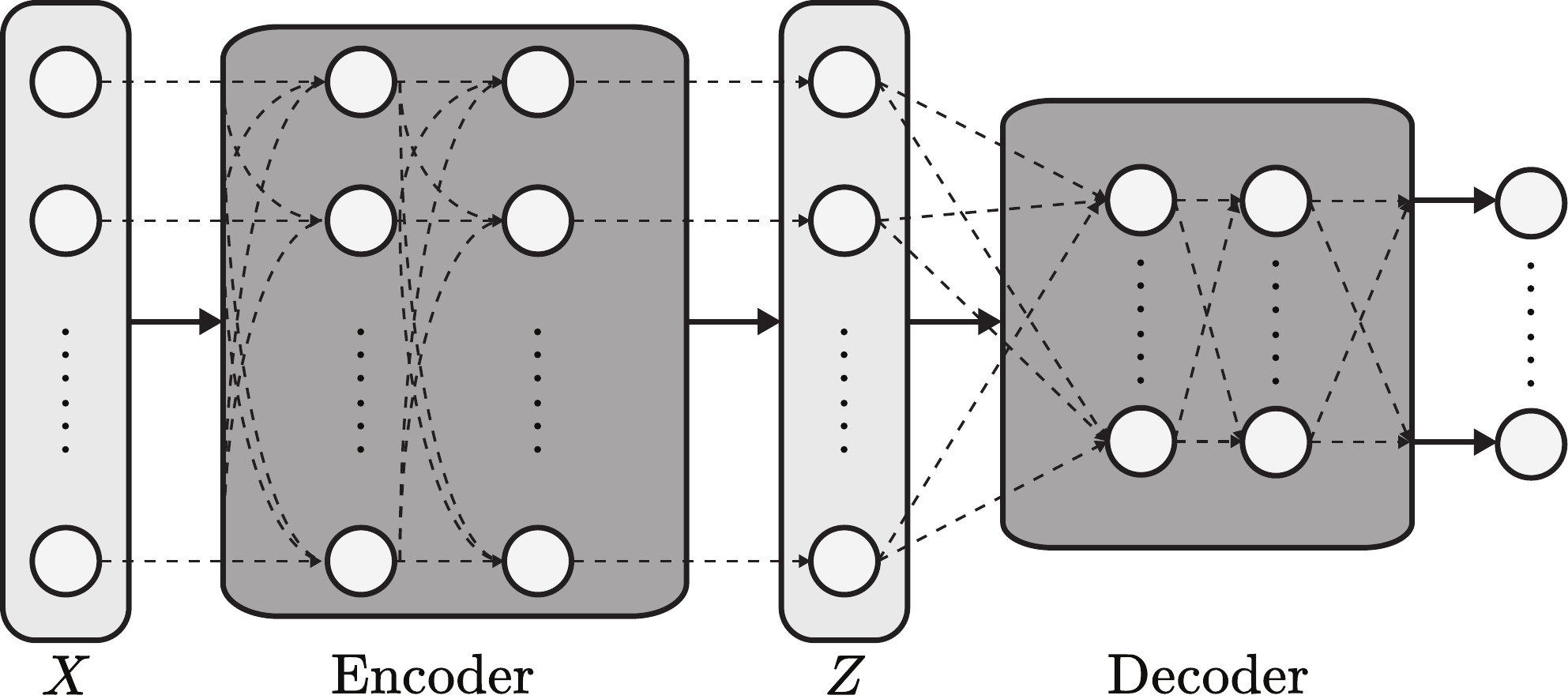}
	}
	\subfigure[MAB]
	{
		\includegraphics[width=0.22\textwidth]{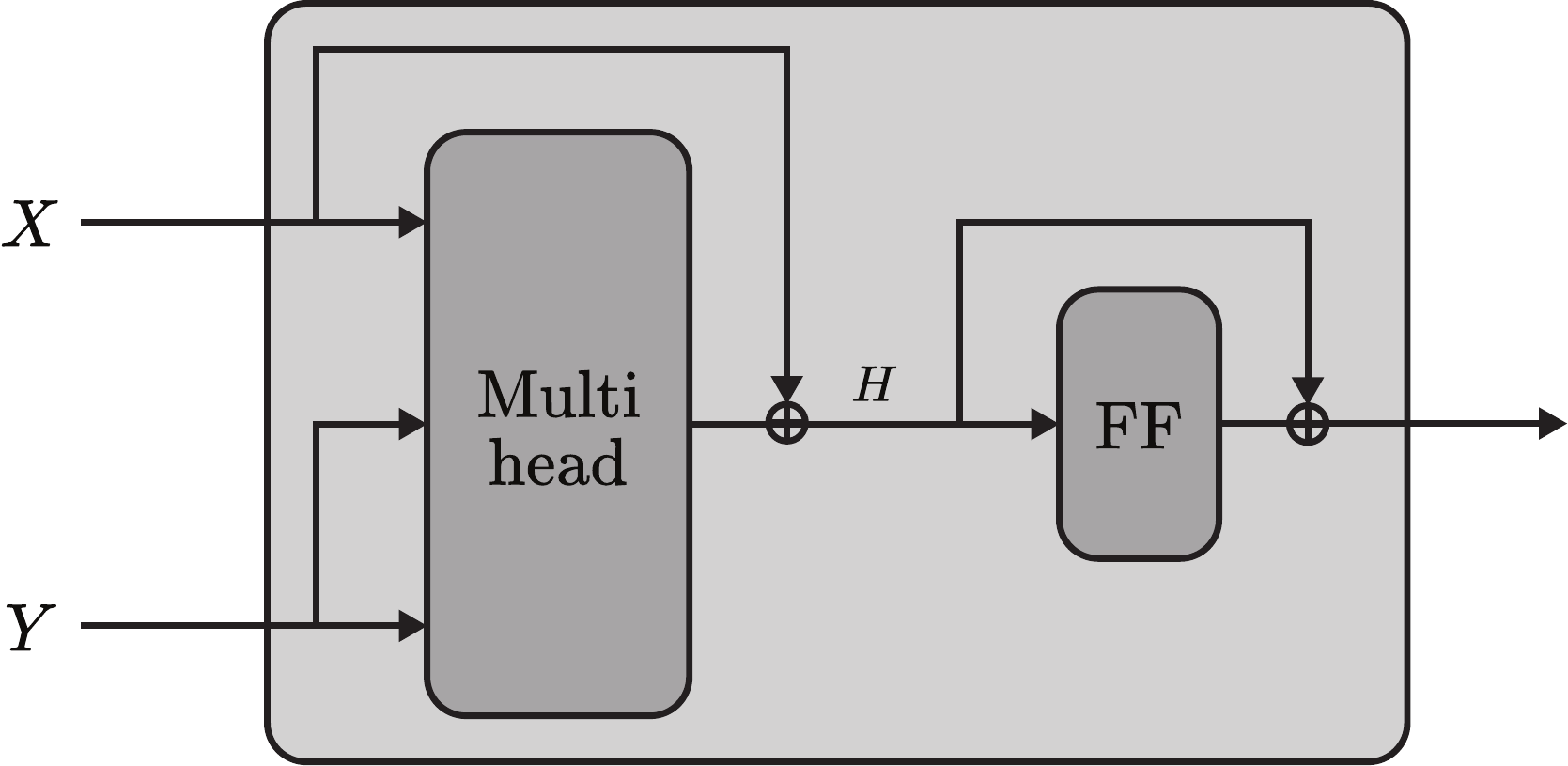}
	}
	\subfigure[SAB]
	{
		\includegraphics[width=0.22\textwidth]{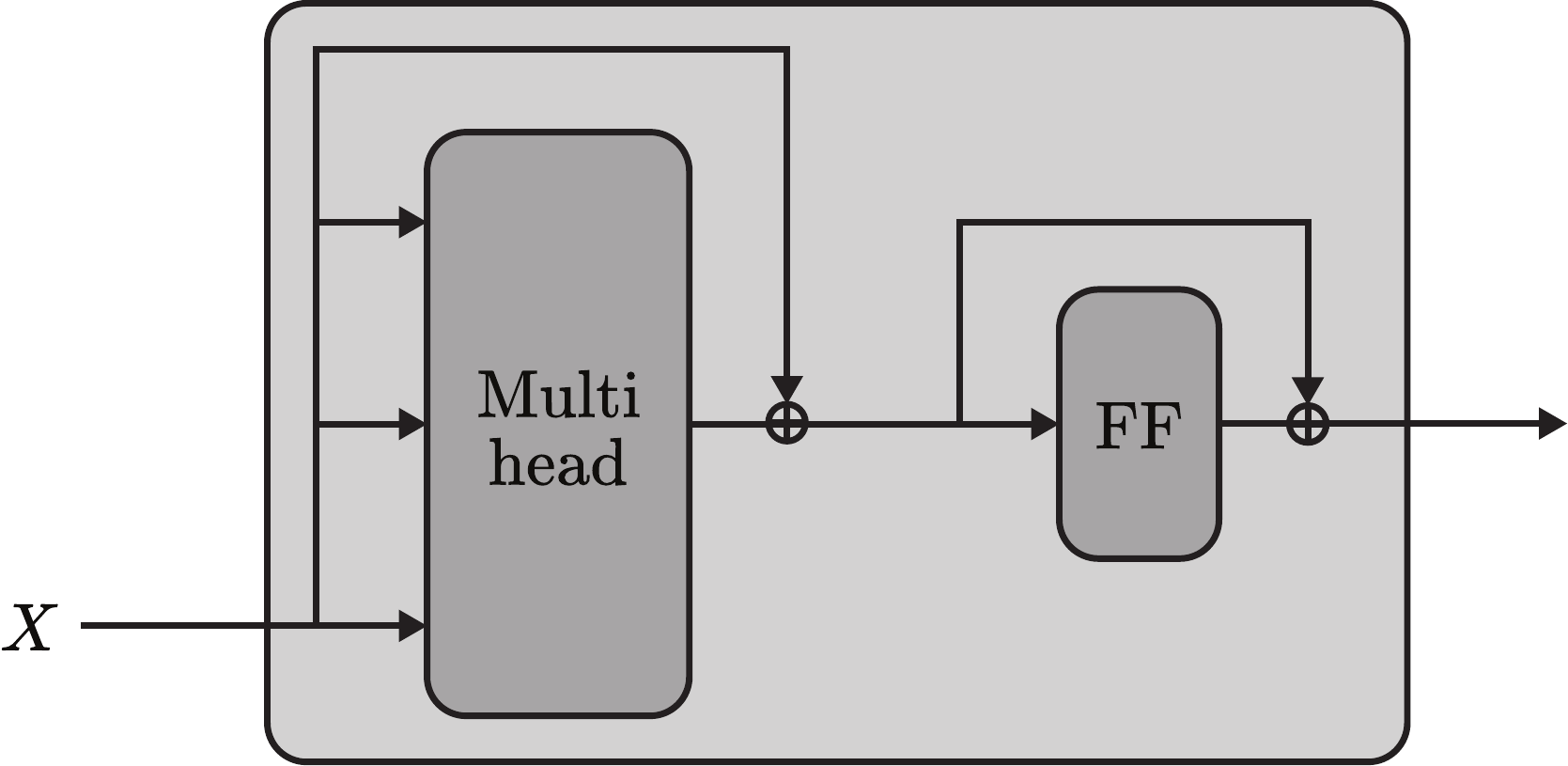}
	}
	\subfigure[ISAB]
	{
		\includegraphics[width=0.22\textwidth]{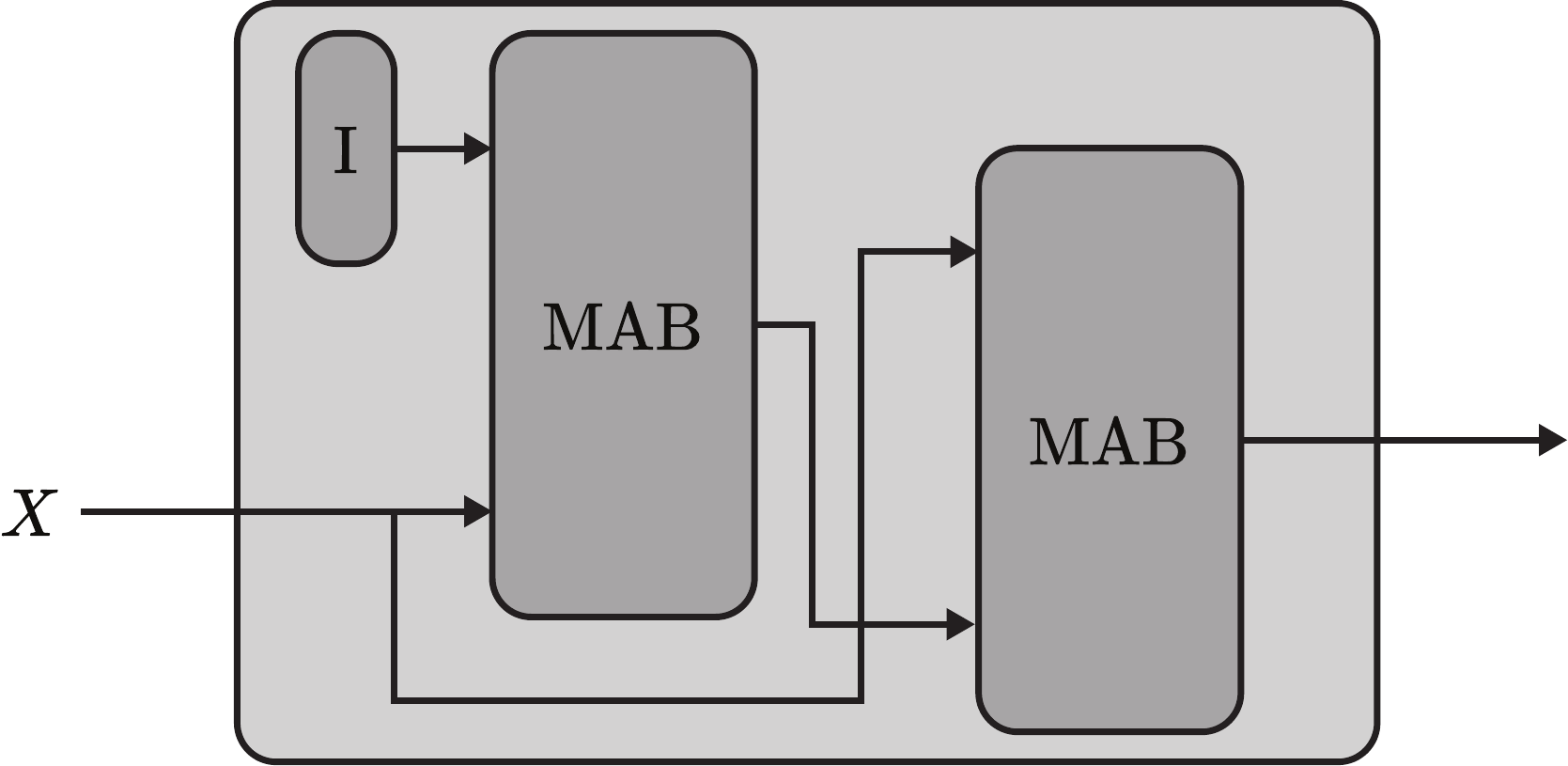}
	}
	\caption{Diagrams of our attention-based set operations.}
\end{figure*}
\subsection{Attention}
\label{subsec:attention}
Assume we have $n$ query vectors (corresponding to a set with $n$ elements) each with dimension $d_q$:~$Q \in \R^{n \times d_q}$.
An attention function $\mathrm{Att}(Q, K, V)$ is a function that maps queries $Q$ to outputs using $n_v$ key-value pairs $K \in \R^{n_v \times d_q}, V \in \R^{n_v \times d_v}$.
\[
\mathrm{Att}(Q, K, V;\omega) = \omega \left( Q K^\top \right) V. \label{eq:att}
\]
The pairwise dot product $QK^\top \in \R^{n \times n_v}$ measures how similar each pair of query and key vectors is, with weights computed with an activation function $\omega$. The output $\omega(QK^\top)V$ is a weighted sum of $V$ where a value gets more weight if its corresponding key has larger dot product with the query.

\textit{Multi-head attention}, originally introduced in \citet{Vaswani2017}, is an extension of the previous attention scheme.
Instead of computing a single attention function, this method first projects $Q, K, V$ onto $h$ different $d^M_q, d^M_q, d^M_v$-dimensional vectors, respectively.
An attention function ($\mathrm{Att}(\cdot; \omega_j)$) is applied to each of these $h$ projections.
The output is a linear transformation of the concatenation of all attention outputs:
\[
\mathrm{Multihead}&(Q, K, V ; \lambda,\omega) = \mathrm{concat}(O_1, \cdots, O_h) W^O, \label{eq:multihead1}\\
\mathrm{where}& \ \ O_j = \mathrm{Att}(Q W_j^Q, KW^K_j, V W^V_j;\omega_j) \label{eq:multihead2}
\]
Note that $\mathrm{Multihead}(\cdot, \cdot, \cdot;\lambda)$ has learnable parameters
$\lambda = \{W_j^Q, W_j^K, W_j^V\}_{j=1}^h$, where $W_j^Q, W_j^K \in \R^{d_q \times d_q^M}$, $W_j^V \in \R^{d_v \times d_v^M}$, $W^O \in \R^{hd_v^M \times d}$.
A typical choice for the dimension hyperparameters is $d_q^M=d_q / h$, $d_v^M=d_v / h$, $d=d_q$.
For brevity, we set $d_q=d_v=d$, $d_q^M=d_v^M=d / h$ throughout the rest of the paper.
Unless otherwise specified, we use a scaled softmax $\omega_j(\cdot) = \mathrm{softmax}(\cdot/\sqrt{d})$, 
which our experiments were worked robustly in most settings.

\section{Set Transformer}
\label{sec:set_transformer}

In this section, we motivate and describe the \textit{Set Transformer}: an attention-based neural network 
that is designed to process sets of data.
Similar to other architectures, a Set Transformer consists of an encoder followed by a decoder (\textit{cf.} Section~\ref{subsec:pooling}), 
but a distinguishing feature is that each layer in the encoder and decoder attends to their inputs to produce activations.
Additionally, instead of a fixed pooling operation such as $\mathrm{mean}$, our aggregating function $\mathrm{pool}(\cdot)$ is parameterized and can thus adapt to the problem at hand.

\subsection{Permutation Equivariant (Induced) Set Attention Blocks}
We begin by defining our attention-based set operations, which we call SAB and ISAB.
While existing pooling methods for sets obtain instance features independently of other instances, we use self-attention to concurrently encode the whole set.
This gives the Set Transformer the ability to compute pairwise as well as higher-order interactions among instances during the encoding process.
For this purpose, we adapt the multihead attention mechanism used in Transformer.
We emphasize that all blocks introduced here are neural network blocks with their own parameters, and not fixed functions.

Given matrices $X, Y \in \R^{n\times d}$ which represent two sets of $d$-dimensional vectors, we define the Multihead Attention Block (MAB) with parameters $\omega$ as follows:
\[
&\mathrm{MAB}(X, Y) = \mathrm{LayerNorm}(H + \mathrm{rFF}(H)), \label{eq:mab1}\\
&\textrm{where} \ \ H = \mathrm{LayerNorm}(X + \mathrm{Multihead}(X, Y, Y; \omega)), \label{eq:mab2}
\]
$\mathrm{rFF}$ is any row-wise feedforward layer (i.e., it processes each instance independently and identically), and $\mathrm{LayerNorm}$ is layer normalization~\citep{Ba2016}.
The MAB is an adaptation of the encoder block of the Transformer~\citep{Vaswani2017} without positional encoding and dropout.
Using the MAB, we define the Set Attention Block (SAB) as
\[
\mathrm{SAB}(X) \coloneqq \mathrm{MAB}(X, X).
\label{eq:sab}
\]
In other words, an SAB takes a set and performs self-attention between the elements in the set, resulting in a set of equal size.
Since the output of SAB contains information about pairwise interactions among the elements in the input set $X$, we can stack multiple SABs to encode higher order interactions.
Note that while the SAB \eqref{eq:sab} involves a multihead attention operation \eqref{eq:mab2}, where $Q=K=V=X$, it could reduce to applying a residual block on $X$.
In practice, it learns more complicated functions due to linear projections of $X$ inside attention heads, \eqref{eq:att} and \eqref{eq:multihead2}.

 A potential problem with using SABs for set-structured data is the quadratic time complexity $\calO(n^2)$, which may be too expensive for large sets ($n \gg 1$).
We thus introduce the \textit{Induced Set Attention Block} (ISAB), which bypasses this problem. Along with the set $X\in\R^{n\times d}$, additionally define $m$ $d$-dimensional vectors $I \in \R^{m\times d}$, which we call \textit{inducing points}.
Inducing points $I$ are part of the ISAB itself, and they are \emph{trainable parameters} which we train along with other parameters of the network.
An ISAB with $m$ inducing points $I$ is defined as:
\[
\mathrm{ISAB}_m(X) &= \mathrm{MAB}(X, H) \in \R^{n \times d}, \label{eq:isab1}\\
\textrm{where} \ \ H &= \mathrm{MAB}(I, X) \in \R^{m \times d}.\label{eq:isab2}
\]
The ISAB first transforms $I$ into $H$ by attending to the input set.
The set of transformed inducing points $H$, which contains information about the input set $X$, is again attended to by the input set $X$ to finally produce a set of $n$ elements.
This is analogous to low-rank projection or autoencoder models, where inputs ($X$) are first projected onto a low-dimensional object ($H$) and then reconstructed to produce outputs.
The difference is that the goal of these methods is reconstruction whereas ISAB aims to obtain good features for the final task.
We expect the learned inducing points to encode some global structure which helps explain the inputs $X$. For example, in the amortized clustering problem on a 2D plane, the inducing points could be appropriately distributed points on the 2D plane so that the encoder can compare elements in the query dataset indirectly through their proximity to these grid points.

Note that in \eqref{eq:isab1} and \eqref{eq:isab2}, attention was computed between a set of size $m$ and a set of size $n$.
Therefore, the time complexity of $\mathrm{ISAB}_m(X;\lambda)$ is $\calO(nm)$ where $m$ is a (typically small) hyperparameter --- an improvement over the quadratic complexity of the SAB.
We also emphasize that both of our set operations (SAB and ISAB) are \emph{permutation equivariant} (definition in Section~\ref{subsec:pooling}):
\begin{property}
Both $\mathrm{SAB}(X)$ and $\mathrm{ISAB}_m(X)$ are permutation equivariant.
\end{property}

\subsection{Pooling by Multihead Attention}
A common aggregation scheme in permutation invariant networks is a dimension-wise average or maximum of the feature vectors (\textit{cf.} Section~\ref{sec:introduction}).
We instead propose to aggregate features by applying multihead attention on a learnable set of $k$ seed vectors $S \in \R^{k \times d}$.
Let $Z \in \R^{n\times d}$ be the set of features constructed from an encoder.
\textit{Pooling by Multihead Attention} (PMA) with $k$ seed vectors is defined as
\[
\mathrm{PMA}_k(Z) = \mathrm{MAB}(S, \mathrm{rFF}(Z)).
\]
Note that the output of $\mathrm{PMA}_k$ is a set of $k$ items.
We use one seed vector ($k=1$) in most cases, but for problems such as amortized clustering which requires $k$ correlated outputs, the natural thing to do is to use $k$ seed vectors.
To further model the interactions among the $k$ outputs, we apply an SAB afterwards:
\[
H = \mathrm{SAB}(\mathrm{PMA}_k(Z)).
\]
We later empirically show that such self-attention after pooling helps in modeling explaining-away (e.g., among clusters in an amortized clustering problem).

Intuitively, feature aggregation using attention should be beneficial because the influence of each instance on the target is not necessarily equal.
For example, consider a problem where the target value is the maximum value of a set of real numbers.
Since the target can be recovered using only a single instance (the largest), finding and attending to that instance during aggregation will be advantageous.

\subsection{Overall Architecture}
Using the ingredients explained above, we describe how we would construct a set transformer consists of an encoder and a decoder.
The encoder $\mathrm{Encoder}: X \mapsto Z \in \R^{n\times d}$ is a stack of SABs or ISABs, for example:
\[
\mathrm{Encoder}(X) &= \mathrm{SAB}(\mathrm{SAB}(X))\label{eq:enc_sab} \\
\mathrm{Encoder}(X) &= \mathrm{ISAB}_{m}(\mathrm{ISAB}_{m}(X)). \label{eq:enc_isab}
\]
We point out again that the time complexity for $\ell$ stacks of SABs and ISABs are $\calO(\ell n^2)$ and $\calO(\ell nm)$, respectively.
This can result in much lower processing times when using ISAB (as compared to SAB), while still maintaining high representational power.
After the encoder transforms data $X \in \R^{n \times d_x}$ into features $Z \in \R^{n\times d}$,
the decoder aggregates them into a single or a set of vectors which is fed into a feed-forward network to get final outputs.
Note that PMA with $k > 1$ seed vectors should be followed by SABs to model the correlation between $k$ outputs.
\[
&\mathrm{Decoder}(Z;\lambda) = \mathrm{rFF}(\mathrm{SAB}(\mathrm{PMA}_k(Z))) \in \R^{k \times d} \\
&\textrm{where} \ \ \mathrm{PMA}_k(Z) = \mathrm{MAB}(S, \mathrm{rFF}(Z)) \in \R^{k \times d},
\]

\subsection{Analysis}
\label{subsec:analysis}
Since the blocks used to construct the encoder (i.e., SAB, ISAB) are permutation equivariant, the mapping of the encoder $X \rightarrow Z$ is permutation equivariant as well.
Combined with the fact that the PMA in the decoder is a permutation invariant transformation, we have the following:
\begin{proposition}
The Set Transformer is permutation invariant.
\end{proposition}

Being able to approximate any function is a desirable property, especially for black-box models such as deep neural networks.
Building on previous results about the universal approximation of permutation invariant functions, we prove the universality of Set Transformers:
\begin{proposition}
The Set Transformer is a universal approximator of permutation invariant functions.
\end{proposition}
\begin{proof}
See supplementary material.
\end{proof}

\section{Related Works}
\label{sec:related_works}
\textbf{Pooling architectures for permutation invariant mappings} \hspace{1pt}
Pooling architectures for sets have been used in various problems such as 3D shape recognition~\citep{BaoguangShi2015, Su2015},
discovering causality~\citep{Lopez-Paz2016},
learning the statistics of a set~\citep{Edwards2017},
few-shot image classification~\citep{Snell2017},
and conditional regression and classification~\citep{Garnelo2018}.
\citet{Zaheer2017} discuss the structure in general and provides a partial proof of the universality of the pooling architecture,
and \citet{Wagstaff2019} further discuss the limitation of pooling architectures. \citet{Bloem-Reddy2019} provides a link between
probabilistic exchangeability and pooling architectures.

\textbf{Attention-based approaches for sets} \hspace{1pt}
Several recent works have highlighted the competency of attention mechanisms in modeling sets.
\citet{Vinyals2016a} pool elements in a set by a weighted average with weights computed using an attention mechanism.
\citet{Yang2018} propose AttSets for multi-view 3D reconstruction, where dot-product attention is applied to compute the weights used to pool the encoded features via weighted sums.
Similarly, \citet{Ilse2018} use attention-based weighted sum-pooling for multiple instance learning.
Compared to these approaches, ours use multihead attention in aggregation, and more importantly, we propose to apply self-attention after pooling to model correlation among multiple outputs.
PMA with $k=1$ seed vector and single-head attention roughly corresponds to these previous approaches.
Although not permutation invariant, \citet{Mishra2018} has attention as one of its core components to meta-learn to solve various tasks using sequences of inputs.
\citet{Kim2019} proposed attention-based conditional regression, where self-attention is applied to the query sets.

\textbf{Modeling interactions between elements in sets} \hspace{1pt}
An important reason to use the Transformer is to explicitly model higher-order interactions among the elements in a set.
\citet{Santoro2017} propose the relational network, a simple architecture that sum-pools all pairwise interactions of elements in a given set, but not higher-order interactions.
Similarly to our work, \citet{Ma2018} use the Transformer to model interactions between the objects in a video.
They use mean-pooling to obtain aggregated features which they fed into an LSTM.

\textbf{Inducing point methods} \hspace{1pt}
The idea of letting trainable vectors $I$ directly interact with data points is loosely based on the inducing point methods used in sparse Gaussian processes~\citep{Snelson2005} and the Nystr\"{o}m method for matrix decomposition~\citep{Fowlkes2004}.
$m$ trainable inducing points can also be seen as $m$ independent memory cells accessed with an attention mechanism.
The differential neural dictionary~\citep{Pritzel2017} stores previous experience as key-value pairs and uses this to process queries.
One can view the ISAB is the inversion of this idea, where queries $I$ are stored and the input features are used as key-value pairs.


\section{Experiments}
\label{sec:experiments}

To evaluate the Set Transformer, we apply it to a suite of tasks involving sets of data points.
We repeat all experiments five times and report performance metrics evaluated on corresponding test datasets.
Along with baselines, we compared various architectures arising from the combination of the choices of having attention in encoders and decoders. 
Unless specified otherwise, ``simple pooling" means average pooling.
\begin{itemize}
\setlength\itemsep{-0.4em}
\item rFF + Pooling \citep{Zaheer2017}: rFF layers in encoder and simple pooling + rFF layers in decoder.
\item rFFp-mean/rFFp-max + Pooling \citep{Zaheer2017}: rFF layers with permutation equivariant variants in encoder \citep[\eqref{eq:multihead1}]{Zaheer2017} and simple pooling + rFF layers in decoder.
\item rFF + Dotprod \citep{Yang2018,Ilse2018}: rFF layers in encoder and dot product attention based weighted sum pooling + rFF layers in decoder.
\item SAB (ISAB) + Pooling (ours):  Stack of SABs (ISABs) in encoder and simple pooling + rFF layers in decoder.
\item rFF + PMA (ours): rFF layers in encoder and PMA (followed by stack of SABs) in decoder.
\item SAB (ISAB) + PMA (ours): Stack of SABs (ISABs) in encoder and PMA (followed by stack of SABs) in decoder.
\end{itemize}

\subsection{Toy Problem: Maximum Value Regression}
\begin{table}[t]
\centering
\small
\caption{Mean absolute errors on the max regression task.}
\vspace{5pt}
\begin{tabular}{@{}ccccc@{}}
\toprule
Architecture & MAE \\\midrule
rFF + Pooling ($\mathrm{mean}$) & 2.133 $\pm$ 0.190 \\
rFF + Pooling ($\mathrm{sum}$) & 1.902 $\pm$ 0.137 \\
rFF + Pooling ($\mathrm{max}$) & \bf 0.1355 $\pm$ 0.0074\\
\midrule
SAB + PMA (ours) & 0.2085 $\pm$ 0.0127\\
\bottomrule
\label{table:max}
\end{tabular}
\end{table}
\label{subsec:max_regression}
To demonstrate the advantage of attention-based set aggregation over simple pooling operations, we consider a toy problem: regression to the maximum value of a given set.
Given a set of real numbers $\{ x_1, \ldots, x_n\}$, the goal is to return $\mathrm{max}(x_1, \cdots, x_n)$.
Given prediction $p$, we use the mean absolute error $|p - \mathrm{max}(x_1, \cdots, x_n)|$ as the loss function. We constructed simple pooling architectures with three different pooling operations: $\operatorname{max}$, $\operatorname{mea}$n, and $\operatorname{sum}$. We report loss values after training in Table~\ref{table:max}.
Mean- and sum-pooling architectures result in a high mean absolute error (MAE).
The model with max-pooling can predict the output perfectly by learning its encoder to be an identity function, and thus achieves the highest performance.
Notably, the Set Transformer achieves performance comparable to the max-pooling model, which underlines the importance of additional flexibility granted by attention mechanisms --- it can learn to find and attend to the maximum element.


\begin{figure}[t]
\centering\includegraphics[width=\columnwidth]{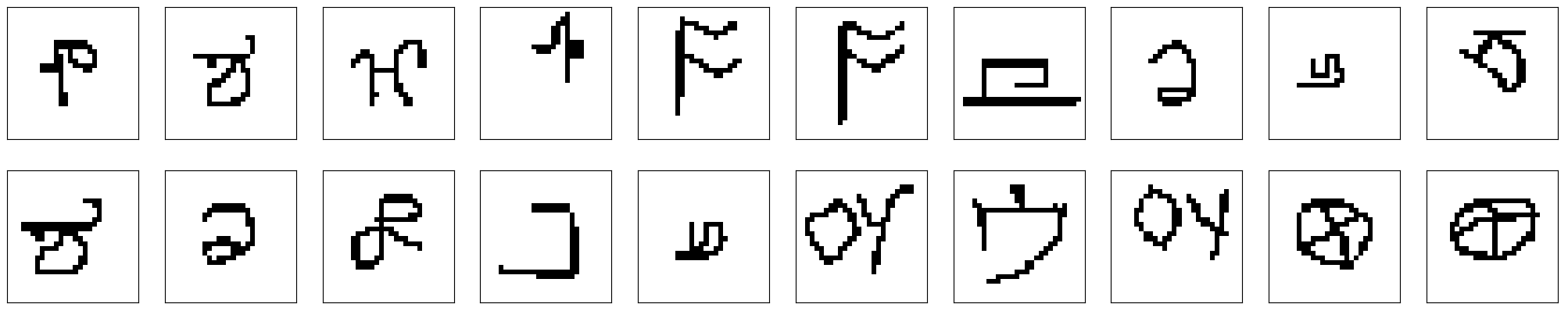}
\caption{Counting unique characters: this is a randomly sampled set of 20 images from the Omniglot dataset. There are 14 different characters inside this set.}
\label{fig:omni_try}
\end{figure}

\begin{table}[t]
\centering
\small
\caption{Accuracy on the unique character counting task.}
\vspace{5pt}
\begin{tabular}{@{}cc@{}} \toprule
Architecture & Accuracy \\ \midrule
rFF + Pooling & 0.4382 $\pm$ 0.0072 \\
rFFp-mean + Pooling & 0.4617 $\pm$ 0.0076 \\
rFFp-max + Pooling & 0.4359 $\pm$ 0.0077 \\
rFF + Dotprod & 0.4471 $\pm$ 0.0076 \\
\midrule
rFF + PMA (ours) & 0.4572 $\pm$ 0.0076\\
SAB + Pooling (ours) & 0.5659 $\pm$ 0.0077 \\
SAB + PMA (ours) & \bf 0.6037 $\pm$ 0.0075 \\
\bottomrule
\label{table:unique}
\end{tabular}
\end{table}
\begin{figure}[t]
\centering
\includegraphics[width=0.90\linewidth]{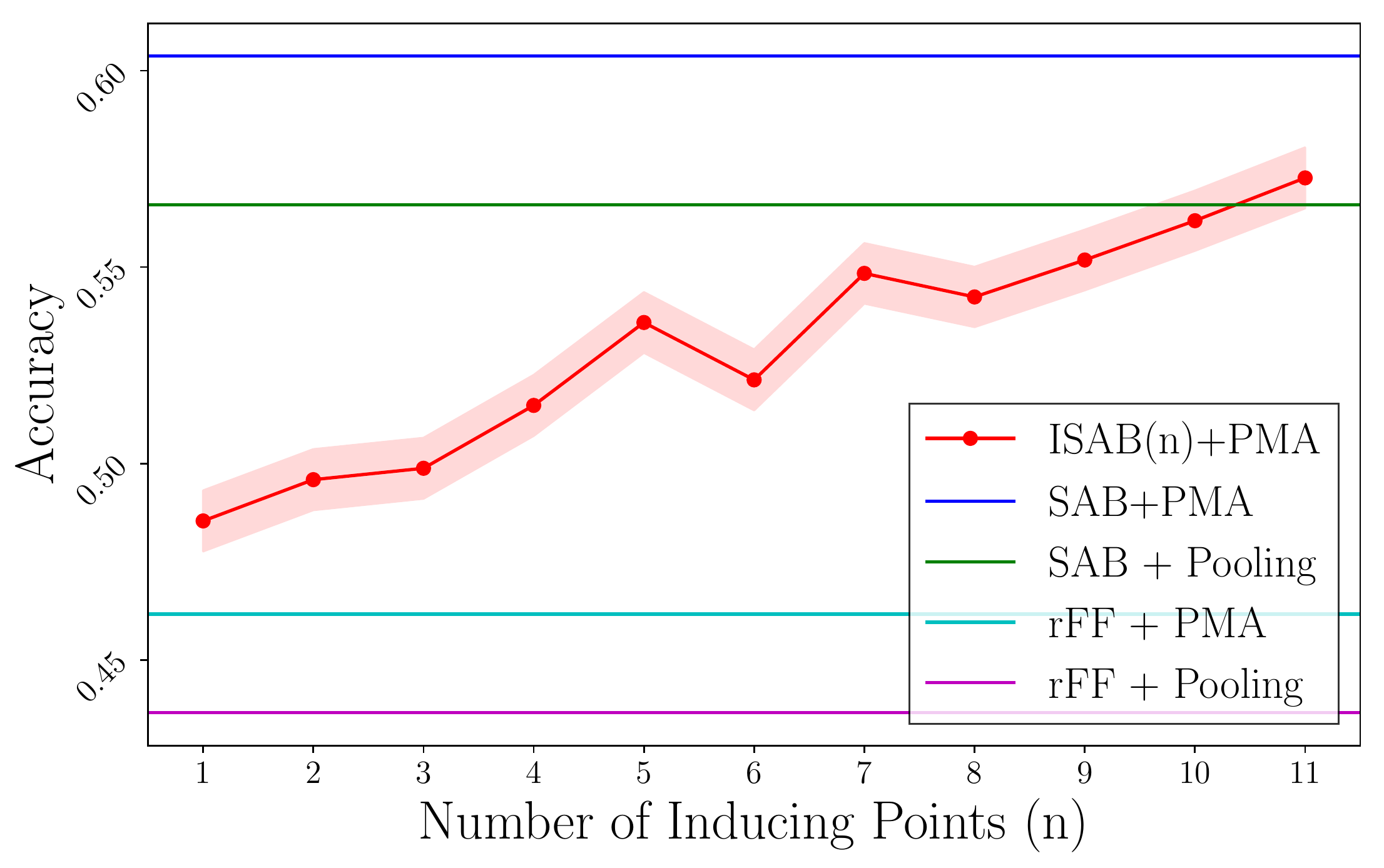}
\caption{
Accuracy of $\mathrm{ISAB}_n + \mathrm{PMA}$ on the unique character counting task.
x-axis is $n$ and y-axis is accuracy.
}
\label{fig:isab_n}
\end{figure}

\begin{table*}[h]
	\centering
	\small
	\caption{Meta clustering results.
	The number inside parenthesis indicates the number of inducing points used in ISABs of encoders.
	We show average likelihood per data for the synthetic dataset and the adjusted rand index (ARI) for the CIFAR-100 experiment.
	LL1/data, ARI1 are the evaluation metrics after a single EM update step.
	The oracle for the synthetic dataset is the log likelihood of the actual parameters used to generate the set, and the CIFAR oracle was computed by running EM until convergence.
	}
	\vspace{5pt}
	\begin{tabular}{@{}ccccc@{}}\toprule
		& \multicolumn{2}{c}{Synthetic} & \multicolumn{2}{c}{CIFAR-100} \\
		\cmidrule{2-3}
		\cmidrule{4-5}
	Architecture & LL0/data & LL1/data & ARI0 & ARI1  \\
	\midrule
	Oracle & -1.4726 & & 0.9150 &\\
	rFF + Pooling & -2.0006 $\pm$ 0.0123 &-1.6186 $\pm$ 0.0042 & 0.5593 $\pm$ 0.0149 & 0.5693 $\pm$ 0.0171 \\
	rFFp-mean + Pooling & -1.7606 $\pm$ 0.0213 &-1.5191 $\pm$ 0.0026 & 0.5673 $\pm$ 0.0053 &0.5798 $\pm$ 0.0058 \\
	rFFp-max + Pooling & -1.7692 $\pm$ 0.0130 &-1.5103 $\pm$ 0.0035 & 0.5369 $\pm$ 0.0154 &0.5536 $\pm$ 0.0186 \\
	rFF + Dotprod & -1.8549 $\pm$ 0.0128 &-1.5621 $\pm$ 0.0046 & 0.5666 $\pm$ 0.0221 &0.5763 $\pm$ 0.0212 \\
    \midrule
	SAB + Pooling (ours) & -1.6772 $\pm$ 0.0066 &-1.5070 $\pm$ 0.0115 & 0.5831 $\pm$ 0.0341 &0.5943 $\pm$ 0.0337\\
	ISAB (16) + Pooling (ours) & -1.6955 $\pm$ 0.0730 &-1.4742 $\pm$ 0.0158 & 0.5672 $\pm$ 0.0124 &0.5805 $\pm$ 0.0122\\
	rFF + PMA (ours) & -1.6680 $\pm$ 0.0040 &-1.5409 $\pm$ 0.0037 &  0.7612 $\pm$ 0.0237 &0.7670 $\pm$ 0.0231\\
	SAB + PMA (ours) & -1.5145 $\pm$ 0.0046 &-1.4619 $\pm$ 0.0048 &  0.9015 $\pm$ 0.0097 &0.9024 $\pm$ 0.0097\\
	ISAB (16) + PMA (ours) & \bf -1.5009 $\pm$ 0.0068 & \bf -1.4530 $\pm$ 0.0037 & \bf 0.9210 $\pm$ 0.0055 &\bf 0.9223 $\pm$ 0.0056 \\
	\bottomrule
	\end{tabular}
	\label{tab:meta_clustering}
\end{table*}

\begin{figure*}[t]
	\centering
	\includegraphics[width=0.4\linewidth]{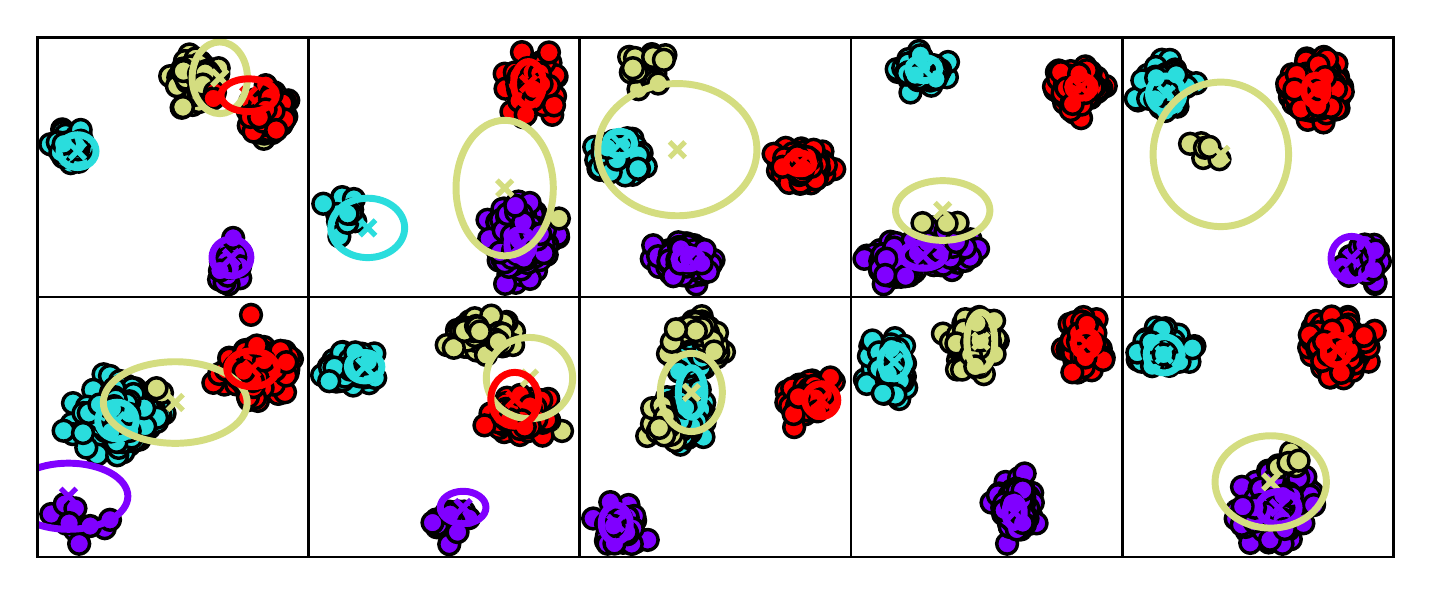}
	\includegraphics[width=0.4\linewidth]{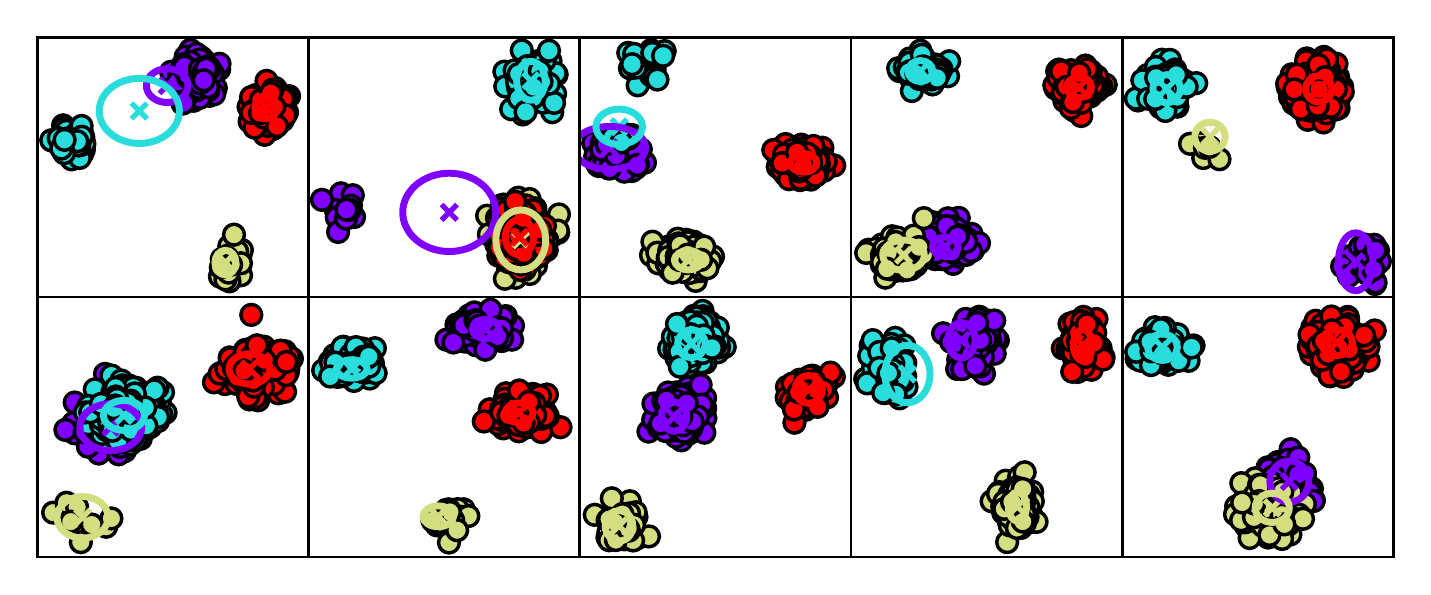}
	\includegraphics[width=0.4\linewidth]{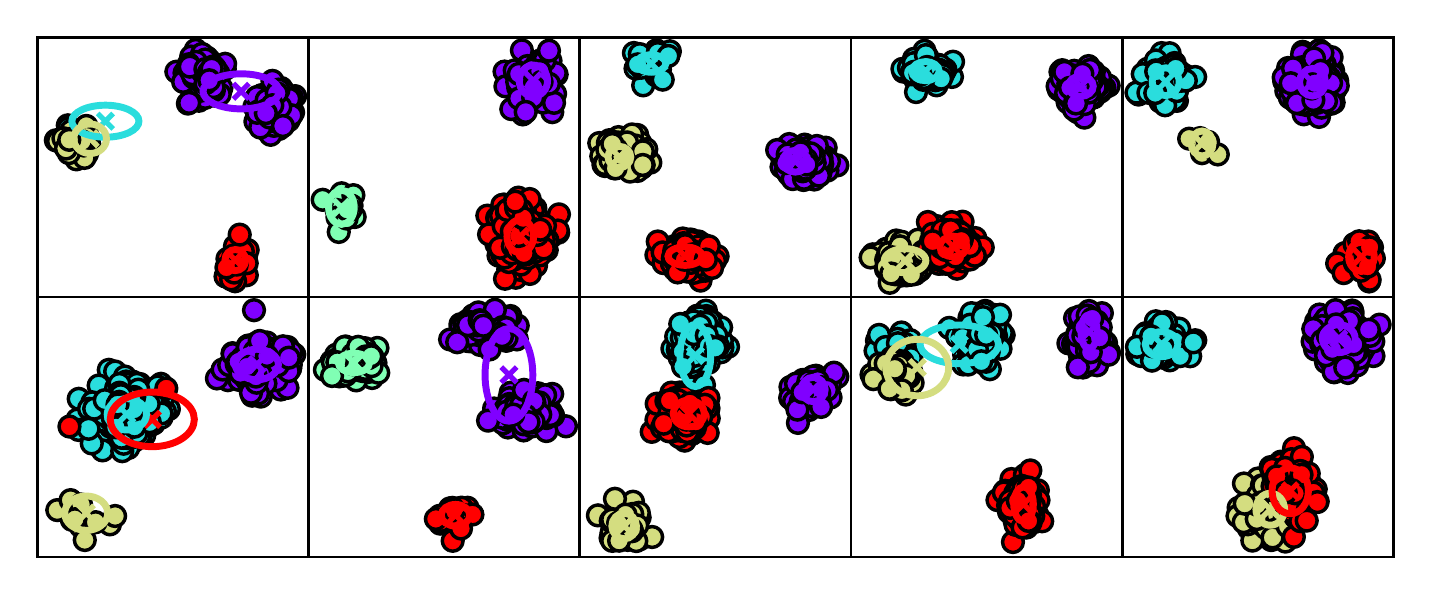}
	\includegraphics[width=0.4\linewidth]{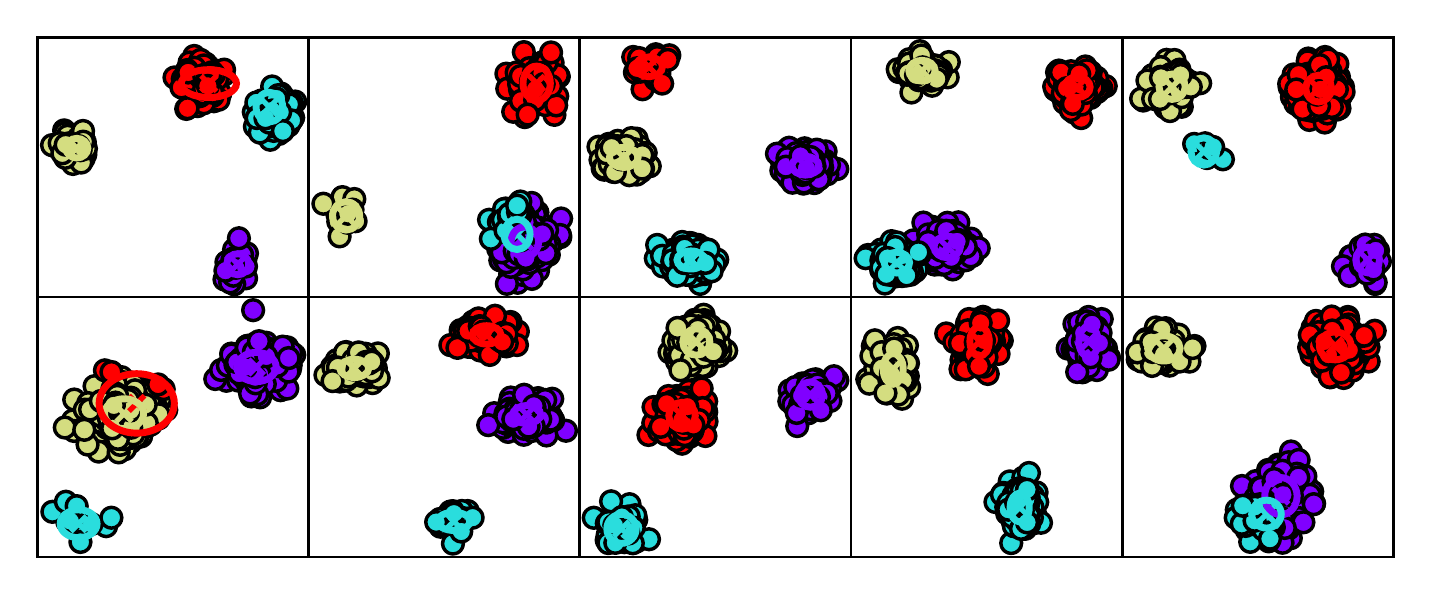}
	\caption{
    Clustering results for 10 test datasets, along with centers and covariance matrices.
    rFF+Pooling (top-left), SAB+Pooling (top-right), rFF+PMA (bottom-left), Set Transformer (bottom-right).
    Best viewed magnified in color.}
	\label{fig:synthetic_clustering}
\end{figure*}

\subsection{Counting Unique Characters}
\label{subsec:unique}
In order to test the ability of modelling interactions between objects in a set, we introduce a new task of counting unique elements in an input set.
We use the Omniglot~\citep{Lake2015} dataset, which consists of 1,623 different handwritten characters from various alphabets, where each character is represented by 20 different images.

We split all characters (and corresponding images) into train, validation, and test sets and only train using images from the train character classes.
We generate input sets by sampling between 6 and 10 images and we train the model  to predict the number of different characters inside the set.
We used a Poisson regression model to predict this number, with the rate $\lambda$ given as the output of a neural network.
We maximized the log likelihood of this model using stochastic gradient ascent.

We evaluated model performance using sets of images sampled from the test set of characters.
Table~\ref{table:unique} reports accuracy, measured as the frequency at which the mode of the Poisson distribution chosen by the network is equal to the number of characters inside the input set.

We additionally performed experiments to see how the number of incuding points affects performance.
We trained $\mathrm{ISAB}_n + \mathrm{PMA}$ on this task while varying the number of inducing points ($n$).
Accuracies are shown in Figure \ref{fig:isab_n}, where other architectures are shown as horizontal lines for comparison.
Note first that even the accuracy of $\mathrm{ISAB}_1 + \mathrm{PMA}$ surpasses that of both $\mathrm{rFF} + \mathrm{Pooling}$ and $\mathrm{rFF} + \mathrm{PMA}$, and that performance tends to increase as we increase $n$.

\subsection{Amortized Clustering with Mixture of Gaussians}
We applied the set-input networks to the task of maximum likelihood of mixture of Gaussians (MoGs).
The log-likelihood of a dataset $X = \{x_1, \dots, x_n\}$ generated from an MoG with $k$ components is
{
\[
\log p(X; \theta)
= \sum_{i=1}^n  \log \sum_{j=1}^k \pi_j \gN ( x_i ; \mu_j, \mathrm{diag}(\sigma^2_j)).
\]}
The goal is to learn the optimal parameters $\theta^*(X) = \argmax_\theta \log p(X;\theta)$.
The typical approach to this problem is to run an iterative algorithm such as Expectation-Maximisation (EM) until convergence.
Instead, we aim to learn a generic meta-algorithm that directly maps the input set $X$ to $\theta^*(X)$.
One can also view this as amortized maximum likelihood learning.
Specifically, given a dataset $X$, we train a neural network to output parameters $f(X;\lambda) = \{\pi(X), \{\mu_j(X), \sigma_j(X)\}_{j=1}^k\}$ which maximize
{\small
\[
\E_X \left[ \sum_{i=1}^{|X|} \log \sum_{j=1}^k \pi_j(X) \gN(x_i ; \mu_j(X), \mathrm{diag}(\sigma_j^2(X)))\right].
\]}
We structured $f(\cdot;\lambda)$ as a set-input neural network and learned its parameters $\lambda$ using stochastic gradient ascent, where we approximate gradients using minibatches of \emph{datasets}.

We tested Set Transformers along with other set-input networks on two datasets.
We used four seed vectors for the PMA ($S\in\R^{4\times d}$) so that each seed vector generates the parameters of a cluster.

\textbf{Synthetic 2D mixtures of Gaussians}:  Each dataset contains $n \in [100, 500]$ points on a 2D plane, each sampled from one of four Gaussians.

\textbf{CIFAR-100}: Each dataset contains $n \in [100, 500]$ images sampled from four random classes in the CIFAR-100 dataset.
Each image is represented by a 512-dim vector obtained from a pretrained VGG network~\citep{Simonyan2014}.

\begin{table*}[t]
\centering
\small
\caption{
Test accuracy for the point cloud classification task using $100, 1000, 5000$ points.
}
\vspace{5pt}
\begin{tabular}{@{}ccccc@{}}
\toprule
Architecture & 100 pts & 1000 pts & 5000 pts \\
\midrule
rFF + Pooling \citep{Zaheer2017} & - & 0.83 $\pm$ 0.01 & - \\
rFFp-max + Pooling \citep{Zaheer2017} & 0.82 $\pm$ 0.02  & 0.87 $\pm$ 0.01& \bf 0.90 $\pm$ 0.003 \\
\midrule
rFF + Pooling& 0.7951 $\pm$ 0.0166  & 0.8551 $\pm$ 0.0142 & 0.8933 $\pm$ 0.0156  \\
\midrule
rFF + PMA (ours) & 0.8076 $\pm$ 0.0160 & 0.8534 $\pm$ 0.0152  & 0.8628 $\pm$ 0.0136\\
ISAB (16) + Pooling (ours) & 0.8273 $\pm$ 0.0159 & \bf 0.8915 $\pm$ 0.0144 & \bf 0.9040 $\pm$ 0.0173  \\
ISAB (16) + PMA (ours) & \bf 0.8454 $\pm$ 0.0144 & 0.8662 $\pm$ 0.0149 & 0.8779 $\pm$ 0.0122  \\
\bottomrule
\end{tabular}
\label{table:pointcloud}
\end{table*}

We report the performance of the oracle along with the set-input neural networks in Table~\ref{tab:meta_clustering}.
We additionally report scores of all models after a single EM update.
Overall, the Set Transformer found accurate parameters and even outperformed the oracles after a single EM update.
This may be due to the relatively small size of the input sets; some clusters have fewer than 10 points.
In this regime, sample statistics can differ substantially from population statistics,
which limits the performance of the oracle while the Set Transformer can adapt accordingly.
Notably, the Set Transformer with only 16 inducing points showed the best performance, even outperforming the full Set Transformer.
We believe this is due to the knowledge transfer and regularization via inducing points, helping the network to learn global structures.
Our results also imply that the improvement from using the PMA is more significant than that of the SAB, supporting our claim of the importance of attention-based decoders.
We provide detailed generative processes, network architectures, and training schemes along with additional experiments with various numbers of inducing points in the supplementary material.

\subsection{Set Anomaly Detection}
\label{subsec:meta_anomaly}

\begin{figure}[t]
\centering
\includegraphics[width=0.90\linewidth]{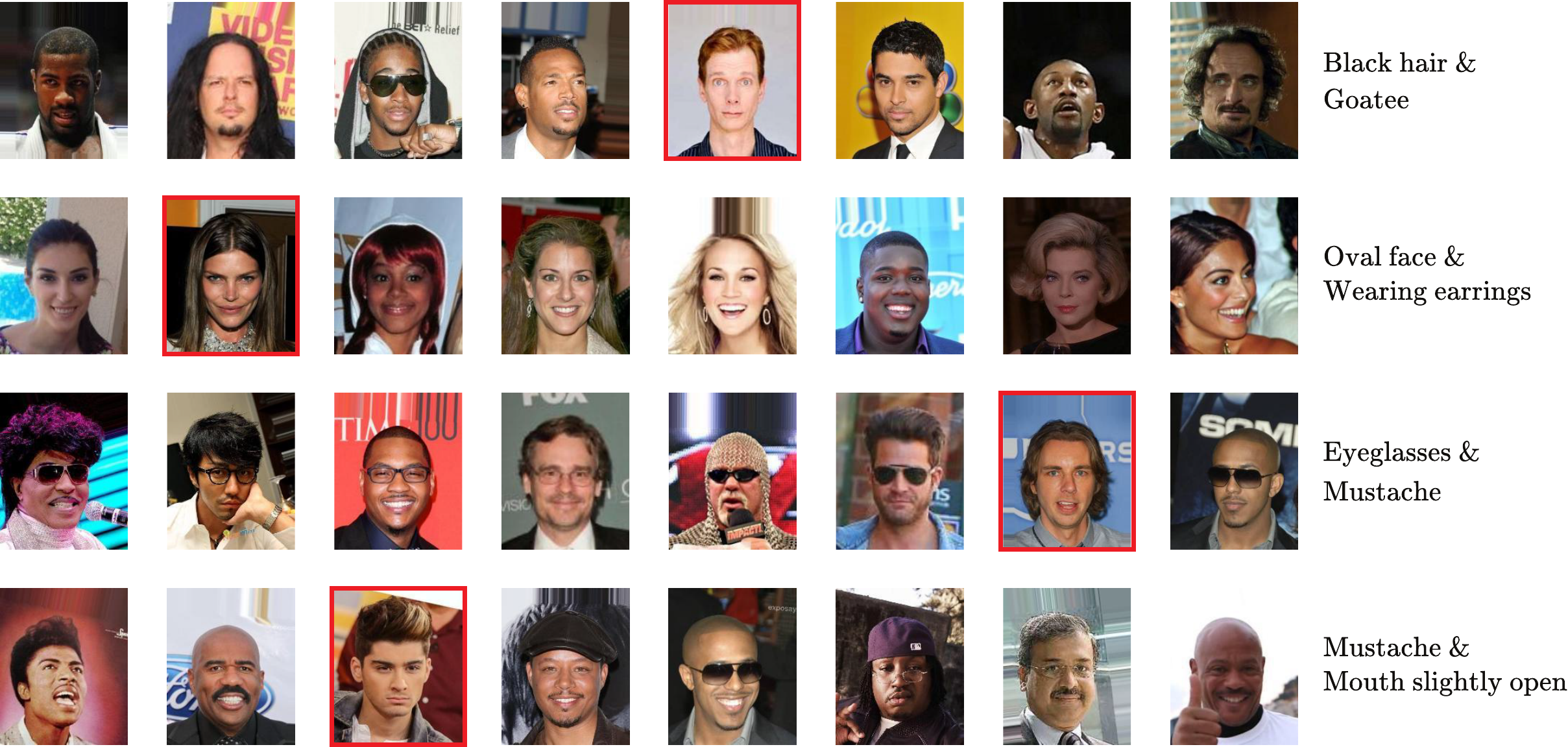}
\caption{
Sampled datasets.
Each row is a dataset, consisting of 7 normal images and 1 anomaly (red box).
In each subsampled dataset, a normal image has two attributes (rightmost column) which anomalies do not.
}
\label{fig:celeba_figures}
\end{figure}

\begin{table}[t]
\centering
\small
\caption{Meta set anomaly results.
Each architecture is evaluated using average of test AUROC and test AUPR.
}
\vspace{5pt}
\begin{tabular}{@{}ccccc@{}}\toprule
Architecture & Test AUROC & Test AUPR  \\
\midrule
Random guess & 0.5 & 0.125 \\
rFF + Pooling & 0.5643 $\pm$ 0.0139 & 0.4126 $\pm$ 0.0108 \\
rFFp-mean + Pooling & 0.5687 $\pm$ 0.0061 & 0.4125 $\pm$ 0.0127 \\
rFFp-max + Pooling & 0.5717 $\pm$ 0.0117 & 0.4135 $\pm$ 0.0162 \\
rFF + Dotprod & 0.5671 $\pm$ 0.0139 & 0.4155 $\pm$ 0.0115 \\
\midrule
SAB + Pooling (ours) & 0.5757 $\pm$ 0.0143 & 0.4189 $\pm$ 0.0167 \\
rFF + PMA (ours) & 0.5756 $\pm$ 0.0130 & 0.4227 $\pm$ 0.0127 \\
SAB + PMA (ours) & \bf 0.5941 $\pm$ 0.0170 & \bf 0.4386 $\pm$ 0.0089 \\
\bottomrule
\end{tabular}
\label{tab:meta_anomaly}
\end{table}
We evaluate our methods on the task of meta-anomaly detection within a set using the CelebA dataset.
The dataset consists of 202,599 images with the total of 40 attributes.
We randomly sample 1,000 sets of images.
For every set, we select two attributes at random and construct the set by selecting seven images containing both attributes and one image with neither.
The goal of this task is to find the image that does not belong to the set.
We give a detailed description of the experimental setup in the supplementary material.
We report the area under receiver operating characteristic curve (AUROC) and area under precision-recall curve (AUPR) in Table~\ref{tab:meta_anomaly}.
Set Transformers outperformed all other methods by a significant margin.

\subsection{Point Cloud Classification}

We evaluated Set Transformers on a classification task using the ModelNet40~\citep{Chang2015} dataset\footnote{The point-cloud dataset used in this experiment was obtained directly from the authors of \citet{Zaheer2017}.}, which contains three-dimensional objects in $40$ different categories.
Each object is represented as a point cloud, which we treat as a set of $n$ vectors in $\R^3$.
We performed experiments with input sets of size $n \in \{100, 1000, 5000\}$.
Because of the large set sizes, MABs are prohibitively time-consuming due to their $\calO(n^2)$ time complexity.

Table~\ref{table:pointcloud} shows classification accuracies.
We point out that \citet{Zaheer2017} used significantly more engineering for the $5000$ point experiment.
For this experiment only, they augmented data (scaling, rotation) and used a different optimizer (Adamax) and learning rate schedule.
Set Transformers were superior when given small sets, but were outperformed by ISAB (16) + Pooling on larger sets.
First note that classification is harder when given fewer points.
We think Set Transformers were outperformed in the problems with large sets because such sets already had sufficient information for classification, diminishing the need to model complex interactions among points.
We point out that PMA outperformed simple pooling in all other experiments.

\section{Conclusion}
\label{sec:conclusion}

In this paper, we introduced the Set Transformer, an attention-based set-input neural network architecture.
Our proposed method uses attention mechanisms for both encoding and aggregating features, and we have empirically validated that both of them are necessary for modelling complicated interactions among elements of a set.
We also proposed an inducing point method for self-attention, which makes our approach scalable to large sets.
We also showed useful theoretical properties of our model, including the fact that it is a universal approximator for permutation invariant functions.
An interesting future work would be to apply Set Transformers to meta-learning problems. In particular, using Set Transformers to meta-learn posterior inference in Bayesian models seems like a promising line of research. Another exciting extension of our work would be to model the uncertainty in set functions by injecting noise variables into Set Transformers in a principled way.

\paragraph{Acknowledgments}
JL and YWT's research leading to these results has received funding from the
European Research Council under the European Union's Seventh Framework
Programme (FP7/2007-2013) ERC grant agreement no. 617071.
JL has also received funding from EPSRC under grant EP/P026753/1. JL acknowledges support from IITP grant funded by the Korea government(MSIT) (No.2017-0-01779, XAI) and Samsung Research Funding \& Incubation Center of Samsung Electronics under Project Number SRFC-IT1702-15.

\bibliography{set_transformer}
\bibliographystyle{icml2019}

\end{document}


\onecolumn

\icmltitle{Supplementary Material for Set Transformer}

\icmlsetsymbol{equal}{*}

\begin{icmlauthorlist}
\icmlauthor{Juho Lee}{oxford1,aitrics}
\icmlauthor{Yoonho Lee}{kakao}
\icmlauthor{Jungtaek Kim}{postech}
\icmlauthor{Adam R. Kosiorek}{oxford1,oxford2}
\icmlauthor{Seungjin Choi}{postech}
\icmlauthor{Yee Whye Teh}{oxford1}
\end{icmlauthorlist}

\icmlaffiliation{oxford1}{Department of Statistics, University of Oxford, United Kingdom}
\icmlaffiliation{oxford2}{Oxford Robotics Institute, University of Oxford, United Kingdom}
\icmlaffiliation{postech}{Department of Computer Science and Engineering, POSTECH, Republic of Korea}
\icmlaffiliation{aitrics}{AITRICS, Republic of Korea}
\icmlaffiliation{kakao}{Kakao Corporation, Republic of Korea}

\icmlcorrespondingauthor{Juho Lee}{juho.lee@stats.ox.ac.uk}

\icmlkeywords{Set transformer, Set-taking network, Attention-based network, Permutation-invariant network}

\vskip 0.3in

\section{Proofs}
\label{app:proofs}

\begin{lemma}
The mean operator $\mathrm{mean}(\{x_1, \ldots, x_n \}) = \frac{1}{n} \sum_{i=1}^n x_i$ is a special case of dot-product attention with softmax.
\end{lemma}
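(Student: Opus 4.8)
The plan is to recall the definition of dot-product attention used in the paper and then exhibit a particular choice of its parameters under which it collapses to the mean. Recall that dot-product attention computes, for a query matrix $Q$, a key matrix $K$, and a value matrix $V$, the output $\mathrm{softmax}(QK^\top/\sqrt{d})\,V$, with the softmax applied row-wise. Stacking the set elements $x_1,\dots,x_n$ as the rows of a matrix $X \in \R^{n\times d}$, I would take the values to be $V = X$ (i.e.\ the value projection is the identity), and choose the query/key projections so that every entry of the logit matrix $QK^\top$ equals the same constant; the simplest choice is $Q = 0$, so that $QK^\top$ is the all-zeros matrix regardless of $K$.

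Next I would observe that a row of the softmax applied to a constant vector — here the zero vector of length $n$ — is exactly the uniform distribution $(1/n,\dots,1/n)$. Hence each row of the attention output equals $\frac1n\sum_{i=1}^n x_i = \mathrm{mean}(\{x_1,\dots,x_n\})$, which is the claimed identity; if a single mean vector rather than $n$ identical copies is wanted, one simply uses a single query row. I would also remark that permutation invariance of this construction is automatic: permuting the rows of $X$ permutes the rows of $V$ and the (constant) logits in the same way, leaving the weighted row-sum unchanged.

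The argument amounts to instantiating degenerate parameters, so there is no deep obstacle; the only thing needing care is to match the precise form of attention used in the main text — in particular whether a linear or feed-forward map is wrapped around it, and how the $\sqrt{d}$ scaling and any additional projections are handled — and to check that the chosen projections ($W^V = I$, $W^Q = 0$, $W^K$ arbitrary) are admissible in that parameterization. As a minor alternative, in case setting $Q=0$ is viewed as too degenerate, I would note that one can instead pick $Q$ and $K$ whose column spaces are mutually orthogonal, so that the logits vanish without any zero matrix appearing, and the same conclusion follows.
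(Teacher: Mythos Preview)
Your proposal is correct and matches the paper's own proof almost exactly: the paper also takes a single zero query $s=\mathbf{0}$, keys and values both equal to $X$, and observes that $\mathrm{softmax}(sX^\top/\sqrt{d})X = \frac{1}{n}\sum_i x_i$. Your extra remarks on permutation invariance and the orthogonal-column alternative are fine but unnecessary for the paper's purposes.
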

\begin{proof}
Let $s = \mathbf{0} \in \R^d$ and $X \in \R^{n \times d}$.
\[
\mathrm{Att}(s, X, X; \mathrm{softmax})
= \mathrm{softmax}\left( \frac{s X^\top}{\sqrt{d}} \right) X
= \frac{1}{n} \sum_{i=1}^n x_i \nonumber
\]
\end{proof}

\begin{lemma}
The decoder of a Set Transformer, given enough nodes, can express any element-wise function of the form $\left( \frac{1}{n} \sum^n_{i=1} z_i^p \right)^\frac{1}{p}$.
\end{lemma}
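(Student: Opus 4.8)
The plan is to factor the target map through three components of the decoder. Recall that a Set Transformer decoder has the form $\mathrm{rFF}(\mathrm{SAB}(\mathrm{PMA}_k(Z)))$, where $\mathrm{PMA}_k(Z) = \mathrm{MAB}(S,\mathrm{rFF}(Z))$ pools the input set $Z = \{z_1,\dots,z_n\}$ against a learnable seed $S$, and $\mathrm{MAB}$ is multihead dot-product attention dressed with residual connections, layer normalization, and a row-wise feed-forward block. Two facts drive the argument: first, the row-wise $\mathrm{rFF}$ modules are plain multilayer perceptrons, so by the universal approximation theorem, given enough hidden nodes, they uniformly approximate any continuous function on a compact domain; second, by the preceding lemma, attention with a zero query returns exactly the mean $\frac1n\sum_{i=1}^n(\,\cdot\,)$ of its value rows.

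Taking $k=1$ seed, I would proceed in three steps. \emph{Step 1:} use the inner $\mathrm{rFF}$ of $\mathrm{PMA}_1$ to approximate the coordinate-wise map $t\mapsto t^p$, sending each $z_i$ to (approximately) $z_i^p$, on whatever compact domain the $z_i$ are assumed to occupy (e.g.\ positive coordinates, so that the power is real-valued and continuous). \emph{Step 2:} set the query-producing weights inside $\mathrm{MAB}$ so that the seed yields a zero query; by the preceding lemma the attention output is then $\frac1n\sum_{i=1}^n z_i^p$, computed independently in each coordinate by the shared uniform attention weights. \emph{Step 3:} use the outer $\mathrm{rFF}$ of the decoder to approximate $x\mapsto x^{1/p}$, producing $\bigl(\frac1n\sum_{i=1}^n z_i^p\bigr)^{1/p}$.

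The hard part will be that $\mathrm{MAB}$ is not bare attention and the decoder carries an $\mathrm{SAB}$ between the pooling and the final $\mathrm{rFF}$: one must argue each extra piece can be made (approximately) transparent. The residual branch around the attention only adds the fixed, seed-derived vector, which can be cancelled; layer normalization is a fixed, and on the relevant domain invertible, continuous reparametrization whose effect can be undone by the following $\mathrm{rFF}$ (or neutralized by choosing its scale and shift); and $\mathrm{SAB}$, acting on the single pooled vector, reduces to a continuous function of that one vector which, together with its own residual branch, folds into the surrounding feed-forward approximators. I would therefore collapse all of this book-keeping into the two $\mathrm{rFF}$ stages, invoking universal approximation once for the composite pre-mean map and once for the composite post-mean map. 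The step that needs genuine care is controlling the end-to-end approximation error through the layer-norm and the averaging — i.e.\ checking that the errors introduced in Step 1 do not blow up after division by $n$ and composition with $x\mapsto x^{1/p}$ near the boundary of the domain.
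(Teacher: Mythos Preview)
Your proposal is correct and follows essentially the same three-step route as the paper: inner $\mathrm{rFF}$ realizes $z\mapsto z^p$, zero-query attention realizes the mean via the preceding lemma, and the outer $\mathrm{rFF}$ realizes $x\mapsto x^{1/p}$. You are in fact more careful than the paper about the residual branches, layer normalization, and the intervening $\mathrm{SAB}$, all of which the paper silently elides; the only extra detail the paper supplies is a per-coordinate multihead construction ($h=d$, each head projecting onto one coordinate), which is unnecessary once the attention weights are uniform.
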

\begin{proof}
We first note that we can view the decoder as the composition of functions
\[
\mathrm{Decoder}(Z) &= \mathrm{rFF}(H) \\
\textrm{where} \ \ H &=\mathrm{rFF}(\mathrm{MAB}(Z, \mathrm{rFF}(Z))) \label{eq:decoder_h}
\]
We focus on $H$ in \eqref{eq:decoder_h}.
Since feed-forward networks are universal function approximators at the limit of infinite nodes, let the feed-forward layers in front and back of the MAB encode the element-wise functions $z \rightarrow z^p$ and $z \rightarrow z^{\frac{1}{p}}$, respectively.
We let $h=d$, so the number of heads is the same as the dimensionality of the inputs, and each head is one-dimensional.
Let the projection matrices in multi-head attention ($W_j^Q, W_j^K, W_j^V$) represent projections onto the jth dimension and the output matrix ($W^O$) the identity matrix.
Since the mean operator is a special case of dot-product attention, by simple composition, we see that an MAB can express any dimension-wise function of the form
\[
M_p(z_1, \cdots, z_n) = \left( \frac{1}{n} \sum^n_{i=1} z_i^p \right)^\frac{1}{p}.
\]
\end{proof}

\begin{lemma}
\label{lem:sum}
A PMA, given enough nodes, can express sum pooling $\left( \sum^n_{i=1} z_i \right)$.
\end{lemma}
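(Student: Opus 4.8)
The plan is to reduce sum pooling to mean pooling, which the attention mechanism already realises, and then to cancel the $1/n$ normalisation using the residual feed-forward layer inside the $\mathrm{MAB}$. Concretely, take a single seed vector and unfold a PMA as $\mathrm{PMA}(Z) = \mathrm{MAB}(s, \mathrm{rFF}(Z)) = H + \mathrm{rFF}(H)$ with $H = s + \mathrm{Multihead}(s, \mathrm{rFF}(Z), \mathrm{rFF}(Z))$ (dropping the optional LayerNorm, as in the previous lemma). First I would set $s = \mathbf{0}$, let the inner $\mathrm{rFF}$ act as the identity, and --- exactly as in the proof of Lemma 1 --- choose the per-head query/key/value projections and the output projection so that every head's attention logits vanish, the softmax collapses to uniform weights $1/n$, and the heads reassemble coordinate-wise. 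This makes $H = \tfrac1n\sum_{i=1}^n z_i$, i.e.\ the inner block of the PMA already outputs the arithmetic mean of the input set.

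The second step is to undo the averaging with the trailing feed-forward network. Since a feed-forward network can represent any affine map exactly (a single linear unit already does this, so ``enough nodes'' is not even needed here), let the $\mathrm{rFF}$ in $H + \mathrm{rFF}(H)$ implement the element-wise map $v \mapsto (n-1)v$; then $\mathrm{PMA}(Z) = H + (n-1)H = nH = \sum_{i=1}^n z_i$. If one prefers to keep $s \neq \mathbf{0}$, the same works with the trailing $\mathrm{rFF}$ set to $v \mapsto (n-1)v - n\,s$, so that the residual telescopes to $n(H - s) = \sum_i z_i$. The appeal to universal approximation is then only needed to realise whatever maps in the construction are not literally linear (the front $\mathrm{rFF}$ used as the identity), and only on the compact set of values that $H$ can take.

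The step with genuine content --- and the one I expect to be the main obstacle to phrase cleanly --- is the rescaling by the cardinality $n$: the factor $n$ must be baked into the weights of the trailing $\mathrm{rFF}$, so this construction yields sum pooling for inputs of a fixed size $n$ (equivalently, one parameter choice per cardinality). A single parameter setting valid for all $n$ would require the network to recover $n$ from the set, which the averaging structure of attention does not expose --- every head output is a convex combination of value vectors and is therefore unchanged if every element is duplicated, whereas the sum is not --- so I would state and use the lemma for fixed cardinality, which already suffices for the intended point that PMA subsumes sum-pooling-based aggregators.
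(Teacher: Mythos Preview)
Your construction is correct for inputs of a \emph{fixed} cardinality $n$, and you have correctly diagnosed why: if the attention weights are produced by a softmax, every head outputs a convex combination of the values, so the factor $n$ has to be supplied externally by the trailing $\mathrm{rFF}$, and hence baked into the weights. What you are missing is that in this paper the attention block is parameterised by its activation $\omega$, i.e.\ $\mathrm{Att}(Q,K,V;\omega) = \omega(QK^\top)V$, with softmax being only one admissible choice (this is exactly why Lemma~1 is stated as ``with softmax'' and writes $\mathrm{Att}(s,X,X;\mathrm{softmax})$). The paper's proof exploits this freedom: it keeps $s=\mathbf{0}$ but takes $\omega(\cdot)=1+f(\cdot)$ for any $f$ with $f(0)=0$ (identity, sigmoid, ReLU). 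Then the pre-activation scores $sK^\top$ vanish, $\omega$ maps each of them to $1$, and the attention output is literally $\mathbf{1}\,V = \sum_{i=1}^n z_i$, with no normalisation to undo and hence uniformly in $n$.

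So the gap is not in your algebra but in your premise that ``the averaging structure of attention does not expose $n$'': that is true for softmax attention, but the PMA here is not restricted to softmax. Once you allow an unnormalised $\omega$, the sum falls out in one line and the fixed-$n$ caveat disappears. Your route is otherwise sound and would be the right argument if softmax were hard-wired; it simply proves a weaker statement than the one claimed.
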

\begin{proof}
We prove this by construction.

Set the seed $s$ to a zero vector and let $\omega(\cdot)=1+f(\cdot)$, where $f$ is any activation function such that $f(0)=0$.
The identiy, sigmoid, or relu functions are suitable choices for $f$.
The output of the multihead attention is then simply a sum of the values, which is $Z$ in this case.
\end{proof}

We additionally have the following universality theorem for pooling architectures:
\begin{theorem}
\label{thm:univ}
Models of the form $\mathrm{rFF}(\mathrm{sum}(\mathrm{rFF}(\cdot)))$ are universal function approximators in the space of permutation invariant functions.
\end{theorem}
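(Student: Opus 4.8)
The plan is to read the statement in the usual sense: given a compact $\mathcal{X} \subset \R^d$, a continuous permutation-invariant $f \colon \mathcal{X}^n \to \R$ (the vector-valued case follows componentwise), and $\varepsilon > 0$, we must produce feed-forward networks $\phi$ and $\rho$ with $\sup_{(x_1,\dots,x_n)} \bigl| \rho\bigl(\sum_{i=1}^n \phi(x_i)\bigr) - f(x_1,\dots,x_n) \bigr| < \varepsilon$; combined with Lemma~\ref{lem:sum} this also yields universality of the Set Transformer decoder with PMA pooling. I would argue in two stages: first an \emph{exact} sum-decomposition $f = \rho_0 \circ \bigl( \sum_{i=1}^n \phi_0(\cdot) \bigr)$ with \emph{continuous} maps $\phi_0,\rho_0$, then an approximation stage in which $\phi_0$ and $\rho_0$ are each replaced by $\mathrm{rFF}$ networks via the classical universal approximation theorem for feed-forward nets on compact sets, tracking the error through the composition.

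For the exact decomposition I would start with $d = 1$. Choose the polynomial feature map $\phi_0(z) = (1, z, z^2, \dots, z^n) \in \R^{n+1}$, so that $u := \sum_{i=1}^n \phi_0(x_i)$ is exactly the vector of power sums $p_0,\dots,p_n$ of the multiset $\{x_1,\dots,x_n\}$. By Newton's identities the power sums determine the elementary symmetric polynomials, hence the monic polynomial with those roots, hence the multiset up to permutation. Therefore the induced map from the set of sorted tuples of $\mathcal{X}^n$ (a closed, hence compact, subset of $\R^n$ in bijection with the quotient $\mathcal{X}^n/{\sim}$ by permutations) onto its image $U \subset \R^{n+1}$ is a continuous bijection from a compact space into a Hausdorff space, hence a homeomorphism; let $g \colon U \to \mathcal{X}^n/{\sim}$ be its inverse. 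Set $\rho_0 := f \circ g$ on $U$: this is continuous, and since $U$ is compact, the Tietze extension theorem extends $\rho_0$ to a continuous function on all of $\R^{n+1}$. By construction $f(x_1,\dots,x_n) = \rho_0\bigl(\sum_{i=1}^n \phi_0(x_i)\bigr)$ identically.

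For the approximation stage, observe that $\phi_0$ is continuous and bounded on the compact $\mathcal{X}$, so $U$ lies in a compact box $B \subset \R^{n+1}$; let $B_1$ be its $1$-enlargement. On $B_1$ the extended $\rho_0$ is uniformly continuous, so choose $\delta \in (0,1]$ so that its oscillation over balls of radius $\delta$ is below $\varepsilon/2$. By universal approximation pick an $\mathrm{rFF}$ network $\phi$ with $\sup_{z\in\mathcal{X}}\|\phi(z) - \phi_0(z)\| < \delta/n$; then $\hat u := \sum_{i=1}^n \phi(x_i)$ satisfies $\|\hat u - u\| < \delta$, so every such $\hat u$ lies in $B_1$. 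Pick an $\mathrm{rFF}$ network $\rho$ with $\sup_{v\in B_1}|\rho(v)-\rho_0(v)| < \varepsilon/2$. Then $|\rho(\hat u) - f| \le |\rho(\hat u) - \rho_0(\hat u)| + |\rho_0(\hat u) - \rho_0(u)| < \varepsilon/2 + \varepsilon/2 = \varepsilon$, uniformly, which settles the one-dimensional case.

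The main obstacle is the exact decomposition when $d > 1$: the power-sum/Newton's-identities argument is intrinsically one-dimensional, so to separate multisets of vectors one must replace $\phi_0$ by a finite generating family of multisymmetric (vector-valued) polynomials and re-run the compactness-plus-Tietze argument with that map; the required latent dimension then grows with $n$ and $d$ but remains finite, and the error bookkeeping of the previous paragraph is unchanged. I would either spell this out or—as the source statement implicitly does—take the existence of such a finite separating feature map as given. Everything downstream of the structural decomposition is routine.
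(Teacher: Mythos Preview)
Your argument is correct. The paper itself gives no proof of this theorem at all: its ``proof'' is the single sentence ``See Appendix~A of \citet{Zaheer2017}.'' What you have written is precisely a reconstruction of that cited Deep Sets argument---the power-sum feature map $\phi_0(z)=(1,z,\dots,z^n)$, the compact-to-Hausdorff homeomorphism onto the image, Tietze extension of $\rho_0$, and then classical universal approximation of the two continuous pieces---so in substance your approach coincides with the reference the paper defers to. Your handling of the approximation stage (the $\delta/n$ budget and the enlarged box $B_1$) is in fact more carefully quantified than the original, and your caveat about $d>1$ requiring a finite separating family of multisymmetric polynomials is accurate: the Deep Sets appendix treats only the scalar-input case rigorously.
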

\begin{proof}
See Appendix A of \citet{Zaheer2017}.
\end{proof}

By Lemma~\ref{lem:sum}, we know that $\mathrm{decoder}(Z)$ can express any function of the form $\textrm{rFF}(\mathrm{sum}(Z))$.
Using this fact along with Theorem~\ref{thm:univ}, we can prove the universality of Set Transformers:

\begin{proposition}
\label{thm:univ_set}
The Set Transformer is a universal function approximator in the space of permutation invariant functions.
\end{proposition}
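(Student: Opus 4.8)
The plan is to reduce the Set Transformer to the canonical form $\mathrm{rFF}(\mathrm{sum}(\mathrm{rFF}(\cdot)))$ that Theorem~\ref{thm:univ} already handles, by specializing each component of the architecture to the corresponding piece of that composition. A Set Transformer computes $\mathrm{Decoder}(\mathrm{Encoder}(X))$, where the encoder is a stack of SAB (or ISAB) blocks and the decoder is built around a PMA followed by feed-forward layers. I would argue three things: (i) the encoder can represent an arbitrary element-wise $\mathrm{rFF}$; (ii) the PMA inside the decoder can represent $\mathrm{sum}$ pooling; and (iii) the remaining feed-forward layers of the decoder can represent the outer $\mathrm{rFF}$. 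Composing (i)--(iii) and invoking Theorem~\ref{thm:univ} then gives the claim. Here (ii) is exactly Lemma~\ref{lem:sum}, and (iii) is immediate because the decoder applies an $\mathrm{rFF}$ after the PMA (indeed, as already observed before the statement, $\mathrm{decoder}(Z)$ can express any $\mathrm{rFF}(\mathrm{sum}(Z))$), so the real work sits in step (i).

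For step (i), I would recall that an SAB is $\mathrm{MAB}(X,X)$ with internal feed-forward layers and residual connections, and then choose the attention projection and output matrices so that the multi-head attention sublayer contributes nothing to the residual stream (for instance, a zero value projection or a zero output matrix). Under this choice $\mathrm{SAB}(X)$ collapses to $X$ plus a row-wise $\mathrm{rFF}$, i.e.\ a purely element-wise map; stacking such blocks and using that feed-forward networks are universal element-wise approximators in the infinite-width limit, the encoder can approximate any element-wise $\mathrm{rFF}$. If the encoder in question uses ISAB rather than SAB, the same reasoning applies, since the inducing-point attention can likewise be switched off, again leaving only the element-wise feed-forward part.

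Finally, I would assemble the pieces: given a permutation-invariant target $g$ on sets from a compact domain, Theorem~\ref{thm:univ} yields $\mathrm{rFF}$ maps $\phi$ and $\rho$ with $g(X) \approx \rho\!\big(\sum_{i=1}^n \phi(x_i)\big)$; realize $\phi$ by the encoder via step (i), realize $\sum$ by the PMA via Lemma~\ref{lem:sum}, and realize $\rho$ by the post-PMA feed-forward layers of the decoder. Permutation invariance of the assembled network is structural, inherited from the permutation equivariance of the encoder composed with the permutation invariance of the PMA, so nothing is lost there. The main obstacle I anticipate is making step (i) fully rigorous: one must verify that the residual connections and the particular normalization inside the MAB really do allow the attention sublayer to be neutralized, and then control the finite-width errors in the encoder's $\mathrm{rFF}$, in the PMA, and in the decoder's $\mathrm{rFF}$ simultaneously, which is a standard but slightly delicate $\varepsilon$-composition argument relying on uniform continuity of $\rho$ on the relevant compact set.
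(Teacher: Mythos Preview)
Your proposal is correct and matches the paper's proof essentially line for line: the paper zeroes out $W^O$ in every SAB/ISAB so the encoder collapses to an element-wise $\mathrm{rFF}$, then combines this with the already-established fact that the decoder can express $\mathrm{rFF}(\mathrm{sum}(Z))$ (via Lemma~\ref{lem:sum}) and invokes Theorem~\ref{thm:univ}. Your additional remarks about normalization and the $\varepsilon$-composition bookkeeping go a bit beyond what the paper actually writes down, but the strategy and the key reductions are the same.
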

\begin{proof}
By setting the matrix $W^O$ to a zero matrix in every SAB and ISAB, we can ignore all pairwise interaction terms in the encoder.
Therefore, the $\mathrm{encoder}(X)$ can express any instance-wise feed-forward network ($Z=\mathrm{rFF}(X)$).
Directly invoking Theorem~\ref{thm:univ} concludes this proof.
\end{proof}

While this proof required us to ignore the pairwise interaction terms inside the SABs and ISABs to prove that Set Transformers are universal function approximators, our experiments indicated that self-attention in the encoder was crucial for good performance.

\section{Experiment Details}
In all implementations, we omit the feed-forward layer in the beginning of the decoder ($\mathrm{rFF}(Z)$) because the end of the previous block contains a feed-forward layer.
All MABs (inside SAB, ISAB and PMA) use fully-connected layers with ReLU activations for rFF layers.

In the architecture descriptions, $\mathrm{FC}(d,f)$ denotes the fully-connected layer with $d$ units and activation function $f$.
$\mathrm{SAB}(d, h)$ denotes the SAB with $d$ units and $h$ heads.
$\mathrm{ISAB}_m(d, h)$ denotes the ISAB with $d$ units, $h$ heads and $m$ inducing points.
$\mathrm{PMA}_k(d, h)$ denotes the PMA with $d$ units, $h$ heads and $k$ vectors. All MABs used in SAB and PMA uses FC layers with ReLU activations for FF layers.

\newcommand{\relu}{\mathrm{ReLU}}
\newcommand{\bn}{\mathrm{BN}}
\newcommand{\fc}{\mathrm{FC}}
\newcommand{\sab}{\mathrm{SAB}}
\newcommand{\isab}{\mathrm{ISAB}}
\newcommand{\pma}{\mathrm{PMA}}
\newcommand{\conv}{\mathrm{Conv}}

\subsection{Max Regression}
Given a set of real numbers $\{ x_1, \ldots, x_n\}$, the goal of this task is to return the maximum value in the set $\mathrm{max}(x_1, \cdots, x_n)$.
We construct training data as follows.
We first sample a dataset size $n$ uniformly from the set of integers $\{1, \cdots, 10 \}$.
We then sample real numbers $x_i$ independently from the interval $[0, 100]$.
Given the network's prediction $p$, we use the actual maximum value $\mathrm{max}(x_1, \cdots, x_n)$ to compute the mean absolute error $|p - \mathrm{max}(x_1, \cdots, x_n)|$.
We don't explicitly consider splits of train and test data, since we sample a new set $\{ x_1, \ldots, x_n\}$ at each time step.

\begin{table}[h]
\small
\centering
\caption{Detailed architectures used in the max regression experiments.}
\vspace{5pt}
\begin{tabular}{@{}cccc@{}}\toprule
\multicolumn{2}{c}{Encoder} & \multicolumn{2}{c}{Decoder}\\
\cmidrule(lr){1-2} \cmidrule(lr){3-4}
FF & SAB & Pooling & PMA \\
\midrule
$\fc(64,\relu)$ & $\sab(64, 4)$ & $\mathrm{mean, sum, max}$ & $\pma_1(64,4)$\\
$\fc(64,\relu)$ & $\sab(64, 4)$ & $\fc(64,\relu)$ & $\fc(1,-)$\\
$\fc(64,\relu)$ & & $\fc(1,-)$ & \\
$\fc(64,-)$ & & & \\
\bottomrule
\end{tabular}
\label{tab:max_architecture}
\end{table}

We show the detailed architectures used for the experiments in Table~\ref{tab:max_architecture}.
We trained all networks using the Adam optimizer~\citep{Kingma2015} with a constant learning rate of $10^{-3}$ and a batch size of 128 for 20,000 batches, after which loss converged for all architectures.

\subsection{Counting Unique Characters}

\begin{table}[h]
\centering
\caption{Detailed results for the unique character counting experiment.}
\vspace{5pt}
\begin{tabular}{@{}cc@{}} \toprule
Architecture & Accuracy\\ \midrule
rFF + Pooling & 0.4366 $\pm$ 0.0071 \\
rFF + PMA & 0.4617 $\pm$ 0.0073\\
rFFp-mean + Pooling & 0.4617 $\pm$ 0.0076 \\
rFFp-max + Pooling & 0.4359 $\pm$ 0.0077 \\
rFF + Dotprod & 0.4471 $\pm$ 0.0076 \\
\hline
SAB + Pooling & 0.5659 $\pm$ 0.0067 \\
SAB + Dotprod & 0.5888 $\pm$ 0.0072 \\
SAB + PMA (1) & \bf 0.6037 $\pm$ 0.0072\\
SAB + PMA (2) & 0.5806 $\pm$ 0.0075\\
SAB + PMA (4) & 0.5945 $\pm$ 0.0072\\
SAB + PMA (8)  & 0.6001 $\pm$ 0.0078\\
\bottomrule
\end{tabular}
\label{table:appunique}
\end{table}

\begin{table}[h]
\small
\centering
\caption{Detailed architectures used in the unique character counting experiments.}
\vspace{5pt}
\begin{tabular}{@{}cccc@{}}\toprule
\multicolumn{2}{c}{Encoder} & \multicolumn{2}{c}{Decoder}\\
\cmidrule(lr){1-2} \cmidrule(lr){3-4}
rFF & SAB & Pooling & PMA \\
\midrule
$\conv(64, 3, 2, \bn, \relu)$ & $\conv(64, 3, 2, \bn, \relu)$ & $\mathrm{mean}$ & $\pma_1(8, 8)$ \\
$\conv(64, 3, 2, \bn, \relu)$ & $\conv(64, 3, 2, \bn, \relu)$ & $\fc(64, \relu)$ & $\fc(1, \mathrm{softplus})$ \\
$\conv(64, 3, 2, \bn, \relu)$ & $\conv(64, 3, 2,\bn,\relu)$ & $\fc(1,\mathrm{softplus})$ & \\
$\conv(64, 3, 2, \bn, \relu)$ & $\conv(64,3,2,\bn,\relu)$ & & \\
$\fc(64,\relu)$ & $\sab(64,4)$ & & \\
$\fc(64,\relu)$ & $\sab(64,4)$ & & \\
$\fc(64,\relu)$& & & \\
$\fc(64,-)$ & & & \\
\bottomrule
\end{tabular}
\label{tab:unique_architecture}
\end{table}

The task generation procedure is as follows.
We first sample a set size $n$ uniformly from the set of integers $\{ 6, \ldots, 10\}$.
We then sample the number of characters $c$ uniformly from $ \{ 1, \ldots, n \}$.
We sample $c$ characters from the training set of characters, and randomly sample instances of each character so that the total number of instances sums to $n$ and each set of characters has at least one instance in the resulting set.

We show the detailed architectures used for the experiments in Table~\ref{tab:unique_architecture}.
For both architectures, the resulting $1$-dimensional output is passed through a $\mathrm{softplus}$ activation to produce the Poisson parameter $\gamma$.
The role of $\mathrm{softplus}$ is to ensure that $\gamma$ is always positive.

The loss function we optimize, as previously mentioned, is the log likelihood $\log p(x | \gamma) = x \log(\gamma) - \gamma - \log(x!)$.
We chose this loss function over mean squared error or mean absolute error because it seemed like the more logical choice when trying to make a real number match a target integer.
Early experiments showed that directly optimizing for mean absolute error had roughly the same result as optimizing $\gamma$ in this way and measuring $|\gamma - x|$.
We train using the Adam optimizer with a constant learning rate of $10^{-4}$ for 200,000 batches each with batch size 32.

\subsection{Solving maximum likelihood problems for mixture of Gaussians}
\label{subsec:clustering}
\subsubsection{Details for 2D synthetic mixtures of Gaussians experiment}
We generated the datasets according to the following generative process.
\begin{enumerate}
\item Generate the number of data points, $n \sim \mathrm{Unif}(100, 500)$.
\item Generate $k$ centers.
\[
\mu_{j,d} \sim \mathrm{Unif}(-4, 4), \quad j=1,\dots, 4, \,\,\, d=1,2.
 \]
\item Generate cluster labels.
\[
\pi \sim \mathrm{Dir}([1,1]^\top), \quad z_i \sim \mathrm{Categorical}(\pi), \,\,  i=1,\dots, n.
\]
\item Generate data from spherical Gaussian.
\[
x_i \sim \gN(\mu_{z_i}, (0.3)^2 I).
\]
\end{enumerate}

Table~\ref{tab:2d_synthetic_architecture} summarizes the architectures used for the experiments. For all architectures, at each training step, we generate 10 random datasets according to the above generative process, and updated the parameters via Adam optimizer  with initial learning rate $10^{-3}$. We trained all the algorithms for $50k$ steps, and decayed the learning rate to $10^{-4}$ after $35k$ steps. Table~\ref{tab:2d_synthetic_more_results} summarizes the detailed results with various number of inducing points in the ISAB.
Figure~\ref{fig:synthetic_clustering} shows the actual clustering results based on the predicted parameters.

\begin{table}[h]
\small
\centering
\caption{Detailed architectures used in 2D synthetic experiments.}
\vspace{5pt}
\begin{tabular}{@{}ccccc@{}}\toprule
\multicolumn{3}{c}{Encoder} & \multicolumn{2}{c}{Decoder}\\
\cmidrule(lr){1-3} \cmidrule(lr){4-5}
rFF & SAB & ISAB & Pooling & PMA \\
\midrule
$\fc(128,\relu)$ & $\sab(128,4)$ & $\isab_m(128,4)$ & $\mathrm{mean}$ & $\pma_4(128,4)$\\
$\fc(128,\relu)$ & $\sab(128,4)$ & $\isab_m(128,4)$ & $\fc(128,\relu)$ & $\sab(128,4)$\\
$\fc(128,\relu)$ & & & $\fc(128,\relu)$ & $\fc(4\cdot(1+2\cdot 2),-)$ \\
$\fc(128,\relu)$& & & $\fc(128,\relu)$ & \\
& & & $\fc(4\cdot(1+2\cdot 2),-)$ & \\
\bottomrule
\end{tabular}
\label{tab:2d_synthetic_architecture}
\end{table}

\begin{table}[h]
\centering
\caption{Average log-likelihood/data (LL0/data) and average log-likelihood/data after single EM iteration (LL1/data) the clustering experiment. The number inside parenthesis  indicates the number of inducing points used in the SABs of encoder. For all PMAs, four seed vectors were used.}
\vspace{5pt}
\begin{tabular}{@{}ccc@{}} \toprule
Architecture & LL0/data & LL1/data \\ \midrule
Oracle & -1.4726 & \\
\hline
rFF + Pooling& -2.0006 $\pm$ 0.0123 &-1.6186 $\pm$ 0.0042 \\
rFFp-mean + Pooling & -1.7606 $\pm$ 0.0213 &-1.5191 $\pm$ 0.0026 \\
rFFp-max + Pooling & -1.7692 $\pm$ 0.0130 &-1.5103 $\pm$ 0.0035 \\
rFF+Dotprod & -1.8549 $\pm$ 0.0128 &-1.5621 $\pm$ 0.0046 \\
\hline
SAB + Pooling & -1.6772 $\pm$ 0.0066 &-1.5070 $\pm$ 0.0115 \\
ISAB (16) + Pooling & -1.6955 $\pm$ 0.0730 &-1.4742 $\pm$ 0.0158 \\
ISAB (32) + Pooling & -1.6353 $\pm$ 0.0182 &-1.4681 $\pm$ 0.0038 \\
ISAB (64) + Pooling & -1.6349 $\pm$ 0.0429 &-1.4664 $\pm$ 0.0080 \\
rFF + PMA  & -1.6680 $\pm$ 0.0040 &-1.5409 $\pm$ 0.0037 \\
SAB + PMA  & -1.5145 $\pm$ 0.0046 &-1.4619 $\pm$ 0.0048 \\
ISAB (16) + PMA & -1.5009 $\pm$ 0.0068 &-1.4530 $\pm$ 0.0037 \\
ISAB (32) + PMA & \bf -1.4963 $\pm$ 0.0064 & \bf-1.4524 $\pm$ 0.0044 \\
ISAB (64) + PMA & -1.5042 $\pm$ 0.0158 &-1.4535 $\pm$ 0.0053 \\
\bottomrule
\end{tabular}
\label{tab:2d_synthetic_more_results}
\end{table}

\subsubsection{2D Synthetic Mixtures of Gaussians Experiment on Large-scale Data}
To show the scalability of the set transformer, we conducted additional experiments on large-scale 2D synthetic clustering dataset. We generated the synthetic data as before, except that we sample the number of data points $n~\mathrm{Unif}(1000, 5000)$ and set $k=6$. We report the clustering accuracy of a subset of comparing methods in Table~\ref{tab:2d_synthetic_large_scale}. The set transformer with only 32 inducing points works extremely well, demonstrating its scalability and efficiency.

\begin{table}
\centering
\caption{Average log-likelihood/data (LL0/data) and average log-likelihood/data after single EM iteration (LL1/data) the clustering experiment on large-scale data. The number inside parenthesis  indicates the number of inducing points used in the SABs of encoder. For all PMAs, six seed vectors were used.}
\vspace{5pt}
\begin{tabular}{@{}ccc@{}} \toprule
Architecture & LL0/data & LL1/data \\ \midrule
Oracle & -1.8202 & \\
\hline
rFF + Pooling & -2.5195 $\pm$ 0.0105 &-2.0709 $\pm$ 0.0062 \\
rFFp-mean + Pooling & -2.3126 $\pm$ 0.0154 &-1.9749 $\pm$ 0.0062 \\
\hline
rFF + PMA (6) & -2.0515 $\pm$ 0.0067 &-1.9424 $\pm$ 0.0047 \\
SAB (32) + PMA (6) & \bf-1.8928 $\pm$ 0.0076 & \bf-1.8549 $\pm$ 0.0024 \\
\bottomrule
\end{tabular}
\label{tab:2d_synthetic_large_scale}
\end{table}

\subsubsection{Details for CIFAR-100 amortized clutering experiment}
\label{subsec:cifar}
We pretrained VGG net~\citep{Simonyan2014} with CIFAR-100, and obtained the test accuracy 68.54\%. Then, we extracted feature vectors of 50k training images of CIFAR-100 from the 512-dimensional hidden layers of the VGG net (the layer just before the last layer). Given these feature vectors, the generative process of datasets is as follows.
\begin{enumerate}
\item Generate the number of data points, $n \sim \mathrm{Unif}(100, 500)$.
\item Uniformly sample four classes among 100 classes.
\item Uniformly sample $n$ data points among four sampled classes.
\end{enumerate}
Table~\ref{tab:cifar100_architecture} summarizes the architectures used for the experiments. For all architectures, at each training step, we generate 10 random datasets according to the above generative process, and updated the parameters via Adam optimizer with initial learning rate $10^{-4}$. We trained all the algorithms for $50k$ steps, and decayed the learning rate to $10^{-5}$ after $35k$ steps. Table~\ref{tab:cifar100_more_results} summarizes the detailed results with various number of inducing points in the ISAB.

\begin{table}
\small
\centering
\caption{Detailed architectures used in CIFAR-100 meta clustering experiments.}
\vspace{5pt}
\begin{tabular}{@{}ccccc@{}}\toprule
\multicolumn{3}{c}{Encoder} & \multicolumn{2}{c}{Decoder}\\
\cmidrule(lr){1-3} \cmidrule(lr){4-5}
rFF & SAB & ISAB & rFF & PMA \\
\midrule
$\fc(256,\relu)$ & $\sab(256, 4)$ & $\isab_m(256, 4)$ & $\mathrm{mean}$ & $\pma_4(128,4)$\\
$\fc(256,\relu)$ &  $\sab(256, 4)$ & $\isab_m(256, 4)$ & $\fc(256,\relu)$ & $\sab(256, 4)$\\
$\fc(256,\relu)$ &  $\sab(256, 4)$ & $\isab_m(256, 4)$ & $\fc(256,\relu)$ & $\sab(256, 4)$ \\
$\fc(256,\relu)$ & & & $\fc(256,\relu)$) & $\fc(4\cdot(1+2\cdot 512),-)$ \\
$\fc(256,\relu)$ & & &$\fc(256,\relu)$ & \\
$\fc(256,-)$ & & & $\fc(256,\relu)$ & \\
 & & & $\fc(4\cdot(1+2\cdot 512),-)$& \\
\bottomrule
\end{tabular}
\label{tab:cifar100_architecture}
\end{table}

\begin{table}
\centering
\caption{Average clustering accuracies measured by Adjusted Rand Index (ARI) for CIFAR100 clustering experiments. The number inside parenthesis  indicates the number of inducing points used in the SABs of encoder. For all PMAs, four seed vectors were used.
}
\vspace{5pt}
\begin{tabular}{@{}ccc@{}} \toprule
Architecture & ARI0 & ARI1 \\ \midrule
Oracle & 0.9151 & \\
rFF + Pooling & 0.5593 $\pm$ 0.0149 &0.5693 $\pm$ 0.0171 \\
rFFp-mean + Pooling & 0.5673 $\pm$ 0.0053 &0.5798 $\pm$ 0.0058 \\
rFFp-max + Pooling & 0.5369 $\pm$ 0.0154 &0.5536 $\pm$ 0.0186 \\
rFF+Dotprod & 0.5666 $\pm$ 0.0221 &0.5763 $\pm$ 0.0212 \\
\hline
SAB + Pooling & 0.5831 $\pm$ 0.0341 &0.5943 $\pm$ 0.0337 \\
ISAB (16) + Pooling  & 0.5672 $\pm$ 0.0124 &0.5805 $\pm$ 0.0122 \\
ISAB (32) + Pooling & 0.5587 $\pm$ 0.0104 &0.5700 $\pm$ 0.0134 \\
ISAB (64) +  Pooling & 0.5586 $\pm$ 0.0205 &0.5708 $\pm$ 0.0183 \\
rFF + PMA  & 0.7612 $\pm$ 0.0237 &0.7670 $\pm$ 0.0231 \\
SAB + PMA & 0.9015 $\pm$ 0.0097 &0.9024 $\pm$ 0.0097 \\
ISAB (16) + PMA & \bf 0.9210 $\pm$ 0.0055 &\bf 0.9223 $\pm$ 0.0056 \\
ISAB (32) + PMA & 0.9103 $\pm$ 0.0061 &0.9119 $\pm$ 0.0052 \\
ISAB (64) + PMA & 0.9141 $\pm$ 0.0040 &0.9153 $\pm$ 0.0041 \\
\bottomrule
\end{tabular}
\label{tab:cifar100_more_results}
\end{table}

\subsection{Set Anomaly Detection}
\label{app:anomaly}

\begin{table}[h]
\small
\centering
\caption{Detailed architectures used in CelebA meta set anomaly experiments.
$\conv(d, k, s, r, f)$ is a convolutional layer with $d$ output channels, $k$ kernel size, $s$ stride size, $r$ regularization method, and activation function $f$. If $d$ is a list, each element in the list is distributed.
$\fc(d, f, r)$ denotes a fully-connected layer with $d$ units, activation function $f$ and $r$ regularization method.
If ${d}$ is a list, each element in the list is distributed.
$\sab(d, h)$ denotes the SAB with $d$ units and $h$ heads.
$\pma(d, h, n_{\mathrm{seed}})$ denotes the PMA with $d$ units, $h$ heads and $n_{\mathrm{seed}}$ vectors.
All MABs used in SAB and PMA uses FC layers with ReLU activations for rFF layers.}
\vspace{5pt}
\begin{tabular}{@{}cccc@{}}\toprule
\multicolumn{2}{c}{Encoder} & \multicolumn{2}{c}{Decoder}\\
\cmidrule(lr){1-2} \cmidrule(lr){3-4}
rFF & SAB & Pooling & PMA \\
\midrule
\multicolumn{2}{c}{$\conv([32, 64, 128], 3, 2, \mathrm{Dropout}, \relu)$} & $\mathrm{mean}$ & $\pma_4(128, 4)$ \\
\multicolumn{2}{c}{$\fc([1024, 512, 256], -, \mathrm{Dropout})$} & $\fc(128, \relu, -)$ & $\sab(128, 4)$ \\
\multicolumn{2}{c}{$\fc(256, -, -)$} & $\fc(128, \relu, -)$ & $\fc(256 \cdot 8, -, -)$ \\
$\fc([128, 128, 128], \relu, -)$ & $\sab(128,4)$ & $\fc(128, \relu, -)$ & \\
$\fc([128, 128, 128], \relu, -)$ & $\sab(128,4)$  & $\fc(256 \cdot 8, -, -)$ & \\
$\fc(128, \relu, -)$ & $\sab(128,4)$ & & \\
$\fc(128, -, -)$ & $\sab(128, 4)$ & & \\
\bottomrule
\end{tabular}
\label{tab:meta_anomaly_architecture}
\end{table}

Table~\ref{tab:meta_anomaly_architecture} describes the architecture for meta set anomaly experiments.
We trained all models via Adam optimizer with learning rate $10^{-4}$ and exponential decay of learning rate for 1,000 iterations.
1,000 datasets subsampled from CelebA dataset (see Figure~\ref{fig:celeba_figures}) are used to train and test all the methods.
We split 800 training datasets and 200 test datasets for the subsampled datasets.

\subsection{Point Cloud Classification}
\label{app:point_cloud}
We used the ModelNet40 dataset for our point cloud classification experiments.
This dataset consists of a three-dimensional representation of 9,843 training and 2,468 test data which each belong to one of 40 object classes.
As input to our architectures, we produce point clouds with $n=100, 1000, 5000$ points each (each point is represented by $(x,y,z)$ coordinates).
For generalization, we randomly rotate and scale each set during training.

We show results our architectures in Table~\ref{tab:pointcloud_architecture} and additional experiments which used $n=100, 5000$ points in Table~\ref{table:pointcloud}.
We trained using the Adam optimizer with an initial learning rate of $10^{-3}$ which we decayed by a factor of $0.3$ every 20,000 steps.
For the experiment with 5,000 points (Table~\ref{table:pointcloud}), we increased the dimension of the attention blocks ($\mathrm{ISAB}_{16}(512, 4)$ instead of $\mathrm{ISAB}_{16}(128, 4)$) and also decayed the weights by a factor of $10^{-7}$.
We also only used one ISAB block in the encoder because using two lead to overfitting in this setting.

\begin{table}[h]
\small
\centering
\caption{Detailed architectures used in the point cloud classification experiments.}
\vspace{5pt}
\begin{tabular}{@{}cccc@{}}\toprule
\multicolumn{2}{c}{Encoder} & \multicolumn{2}{c}{Decoder}\\
\cmidrule(lr){1-2} \cmidrule(lr){3-4}
rFF & ISAB & Pooling & PMA \\
\midrule
$\fc(256, \relu)$ & $\isab(256, 4)$ & $\max$& $\mathrm{Dropout}(0.5)$ \\
$\fc(256, \relu)$  & $\isab(256, 4)$ & $\mathrm{Dropout}(0.5)$ & $\pma_1(256, 4)$ \\
$\fc(256, \relu)$  & & $\fc(256, \relu)$ & $\mathrm{Dropout}(0.5)$ \\
$\fc(256, -)$ & &$\mathrm{Dropout}(0.5)$ & $\fc(40, -)$ \\
&&$\fc(40, -)$ &\\
\bottomrule
\end{tabular}
\label{tab:pointcloud_architecture}
\end{table}

\section{Additional Experiments}
\subsection{Runtime of SAB and ISAB}
\begin{figure}[h]
\centering
\includegraphics[width=0.8\linewidth]{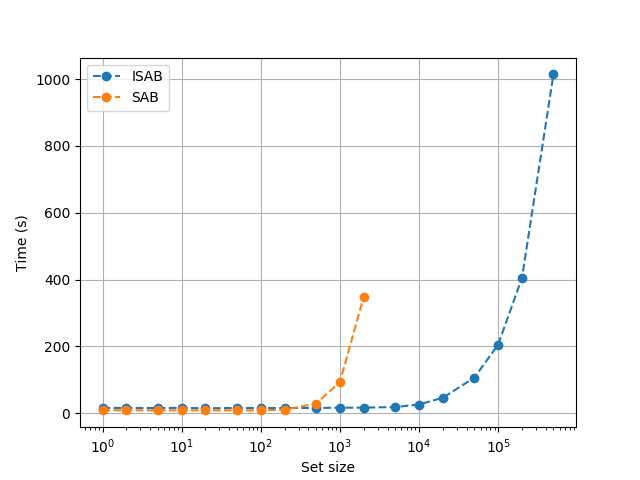}
\caption{
Runtime of a single SAB/ISAB block on dummy data.
x axis is the size of the input set and y axis is time (seconds).
Note that the x-axis is log-scale.
}
\label{fig:runtime}
\end{figure}
We measured the runtime of SAB and ISAB on a simple benchmark (Figure~\ref{fig:runtime}).
We used a single GPU (Tesla P40) for this experiment.
The input data was a constant (zero) tensor of $n$ three-dimensional vectors.
We report the number of seconds it took to process 10,000 sets of each size.
The maximum set size we report for SAB is 2,000 because the computation graph of bigger sets could not fit on our GPU.
The specific attention blocks used are $\mathrm{ISAB}_4(64, 8)$ and $\mathrm{SAB}(64, 8)$.

\bibliography{set_transformer}
\bibliographystyle{icml2019}